\documentclass[11pt]{article}

\usepackage[preprint]{acl}

\usepackage{times}
\usepackage{latexsym}
\usepackage[T1]{fontenc}
\usepackage[utf8]{inputenc}
\usepackage{microtype}
\usepackage{inconsolata}
\usepackage{graphicx}
\usepackage{booktabs}
\usepackage{amsmath,amssymb,amsfonts,amsthm}
\usepackage{bm}
\usepackage{multirow}
\usepackage{xcolor}
\usepackage{colortbl}
\usepackage{array}
\usepackage{tabularx}
\usepackage{enumitem}
\usepackage[ruled,vlined]{algorithm2e}
\SetKwInput{KwInput}{Input}
\SetKwInput{KwOutput}{Output}

\newtheorem{proposition}{Proposition}
\newtheorem{theorem}{Theorem}
\newtheorem{lemma}{Lemma}

\newtheorem{remark}{Remark}

\definecolor{oursbg}{RGB}{253,238,240}
\definecolor{simred}{RGB}{165,30,45}

\newcommand{\method}{\textsc{Vane}}
\newcommand{\E}{\mathbb{E}}
\newcommand{\R}{\mathbb{R}}
\newcommand{\dout}{d'}
\newcommand{\sg}{\mathrm{sg}}

\newcommand{\ip}[2]{\langle #1,#2\rangle}

\newcommand{\csVaneThree}{81.9}

\newcommand{\csMarginThree}{1.1}
\newcommand{\csGainLoRAThree}{3.0}

\newcommand{\csVaneEight}{86.9}

\newcommand{\csBestNameEight}{LD-MoLE}
\newcommand{\csMarginEight}{0.9}
\newcommand{\csGainLoRAEight}{2.5}

\newcommand{\mtMarginThree}{1.5}
\newcommand{\mtGainLoRAThree}{3.9}

\newcommand{\mtMarginEight}{1.3}
\newcommand{\mtGainLoRAEight}{3.5}

\newcommand{\ablTokenRouter}{1.4}
\newcommand{\ablProduct}{2.0}
\newcommand{\ablLatent}{1.3}
\newcommand{\ablNorm}{1.9}
\newcommand{\ablRaw}{0.6}
\newcommand{\ablUntied}{0.5}
\newcommand{\ablNoDense}{0.9}
\newcommand{\ablIntoExperts}{0.3}
\newcommand{\ablPairBal}{0.5}
\newcommand{\ablNoBal}{1.1}
\newcommand{\ablPaired}{1.0}
\newcommand{\ablShared}{0.8}
\newcommand{\ablSoftmax}{0.7}
\newcommand{\ablSeparable}{1.7}
\newcommand{\ablCompassOut}{0.2}

\newcommand{\rhoNoDense}{0.43}

\newcommand{\domGainMath}{1.8}
\newcommand{\domGainCode}{2.0}
\newcommand{\domGainMed}{0.5}
\newcommand{\domGainCS}{1.1}
\newcommand{\mtMarginQwenTwo}{1.3}
\newcommand{\csMarginQwenTwo}{0.9}
\newcommand{\mtMarginQwenThree}{1.2}
\newcommand{\csMarginQwenThree}{0.8}
\newcommand{\timeVaneOverLoRA}{1.26}
\newcommand{\timeVaneVsLDMoLE}{34}
\newcommand{\latVane}{27.2}
\newcommand{\latLDMoLE}{36.2}

\newcommand{\lowdataMarginTen}{1.6}
\newcommand{\rhoVane}{0.61}
\newcommand{\rhoTok}{0.22}
\newcommand{\rhoLat}{0.24}
\newcommand{\rhoProd}{0.12}
\newcommand{\rhoNorm}{0.06}
\newcommand{\gainVane}{0.78}
\newcommand{\gainTok}{0.41}
\newcommand{\gainNorm}{0.04}
\newcommand{\cosMLP}{0.29}
\newcommand{\cosAttn}{0.21}
\newcommand{\pVane}{1.06}

\newcommand{\pMoE}{2.21}
\newcommand{\pLDMoLE}{2.21}
\newcommand{\scaleGainOneB}{4.4}

\newcommand{\miReader}{0.02}
\newcommand{\miWriter}{0.60}

\title{Score the Update, Not the Token:\\Descent-Aligned Routing for Combinatorial LoRA Experts}

\author{
  \textbf{Priya Nair\textsuperscript{1}},
  \textbf{Lukas Brenner\textsuperscript{2}},
  \textbf{Maya Lindqvist\textsuperscript{1}},
  \textbf{Daniel Whitmore\textsuperscript{2}},
  \\
  \textbf{Wen-Hsuan Liu\textsuperscript{1}},
  \textbf{Tom Saliencro\textsuperscript{1}},
  \textbf{Amara Okonkwo\textsuperscript{2}},
  \textbf{Rohan Desai\textsuperscript{2}}
  \\
  \\
  \textsuperscript{1}University of California, Irvine \quad
  \textsuperscript{2}University of Washington \\
  \texttt{saliencro@gmail.com}
}

\begin{document}
\maketitle

\begin{abstract}
Mixture-of-LoRA-experts methods raise the capacity of low-rank adaptation by routing each token to a few low-rank experts. Nearly all of them tie one input-side factor to one output-side factor per expert, and nearly all of them route by scoring the token: the router picks experts without seeing what any of them would write. We argue that the router should score the update. To first order, adding an expert's update to a layer output lowers the loss by the inner product between that update and the negative loss gradient at the output. This usefulness is quadratic in the token, so a router that is linear in the token sees only the part of it that runs through the token mean, and routers that rank experts by the norm of their own activations never see the output factor. If each expert is split into a reader (down-projection) and a writer (up-projection), the usefulness of every reader--writer pair becomes an inner product in the shared rank-$r$ space, and all $N_AN_B$ pairs can be scored from $N_A+N_B$ vectors. We build \method{} on this identity. A low-rank compass predicts the descent direction of each token. \method{} scores every pair by the alignment between its update and the compass without forming any update, activates the top-$k$ pairs with additive gates, and gives every pair its exact first-order router gradient. On single-domain commonsense reasoning and a four-domain multi-task mixture with Llama-3.2-3B and Llama-3.1-8B, \method{} attains the best average among twelve PEFT and MoE-LoRA baselines, by \csMarginEight--\csMarginThree{} and \mtMarginEight--\mtMarginThree{} points respectively, with less than half the trainable parameters of an 8-expert MoE-LoRA. Its router scores also track the measured usefulness of experts far more closely than token routers do.
\end{abstract}

\section{Introduction}
\label{sec:intro}

\begin{figure*}[t]
\centering
\includegraphics[width=\textwidth]{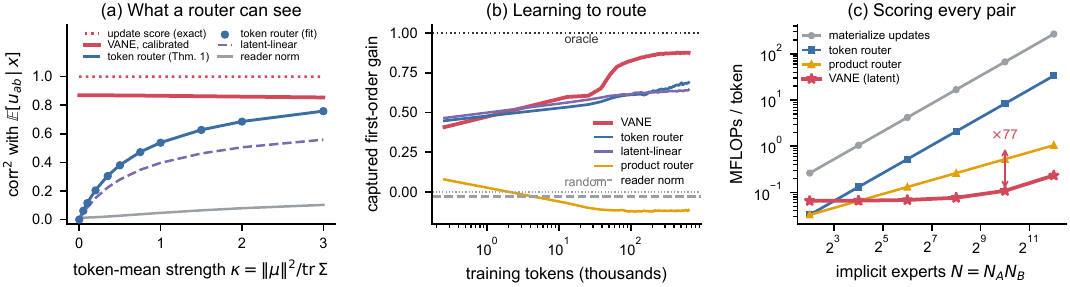}
\caption{\textbf{What a router can see} (exact computations in a controlled linear-response model with 64 reader--writer pairs; Appendix~\ref{app:toy}). (a) Squared correlation between router scores and the expected usefulness $\E[u_{ab}\mid x]$ as the token mean grows. Token routers follow the ceiling of Theorem~\ref{thm:router} (line: closed form; dots: least-squares fit) and see nothing for centered tokens. Reader norms are writer-blind. The update score is exact, and \method{}'s calibrated version stays at $0.87$ for every $\kappa$. (b) Routers trained online with the same dense first-order signal: held-out fraction of the attainable first-order gain captured by their top-4 pairs. (c) FLOPs per token to score every pair at width 4096. \method{} scores 1024 pairs $77\times$ more cheaply than a linear router.}
\label{fig:motivation}
\end{figure*}

Low-rank adaptation (LoRA) fine-tunes a frozen language model by adding a trainable product $BA$ to each weight matrix~\citep{hu2022lora,houlsby2019parameter}. When the training data mixes domains whose updates conflict, the standard remedy is a mixture of low-rank experts: a router sends each token to a few experts, so capacity grows while per-token cost stays small~\citep{shazeer2017outrageously,fedus2022switch,dou2024loramoe,li2024mixlora,wu2024mole,tian2024hydralora,fan2025goat}. Follow-up work refines how experts are allocated across layers~\citep{gao2024mola,liao2025hmora}, how many are active~\citep{zhuang2026ldmole,li2025dynmole,saliencro2026care}, how finely capacity is split~\citep{zhao2025smora,zou2025flylora,zeng2025smore}, and how experts share parameters~\citep{wang2025malora,cao2026comol}.

These designs share two conventions. Each expert welds one input-side factor $A_i$ to one output-side factor $B_i$, and the router \emph{scores the token}: it maps the hidden state $x$ to logits without seeing what any expert would compute. Combinatorial designs mix and match factors~\citep{zhang2024trex,zhao2025smora,zeng2025smore}, and self-routing lets experts select themselves by the size of their own activations~\citep{lv2025aoe,zou2025flylora,han2025furina,hou2026passmoe}. But no design states what a router should estimate, so routers are refined by heuristics such as entropy control, dynamic $k$, expert communication, and semantic priors~\citep{li2025dynmole,zhuang2026ldmole,mu2026talklora,shi2026samora}.

We start from that question. Adding an expert's update $v$ to a layer output $y$ changes the loss, to first order, by $-\ip{\delta}{v}$, where $\delta=-\nabla_y\ell$ is the descent direction at the output. The \emph{usefulness} of an expert for a token is thus the alignment between the update it would write and the direction in which the loss wants the output to move. Two consequences follow (Figure~\ref{fig:motivation}). \emph{Usefulness is second order in the token.} Under a linear-response model, a router that is linear in $x$ sees only the part of usefulness that runs through the token mean, and nothing for centered tokens (Theorem~\ref{thm:router}). Norm-based self-routing sees only the input factor, so it cannot tell apart experts that read the same features but write different things. \emph{Usefulness factorizes once experts are split.} If experts are built from $N_A$ \emph{readers} (down-projections) and $N_B$ \emph{writers} (up-projections), the usefulness of pair $(a,b)$ is $\ip{B_b^{\top}\delta}{A_ax}$, an inner product in the rank-$r$ latent space, so all $N_AN_B$ pairs can be scored from $N_A+N_B$ latent vectors.

We build \method{} on these two facts; its router turns toward the descent direction like a weather vane toward the wind. A \method{} layer keeps readers and writers of rank $r$ and a low-rank \emph{compass} $e(x)$ that predicts $\delta$. Each writer turns the compass into a latent query, and every pair is scored by the alignment between its update and the compass, per unit of update norm, using $r\times r$ writer Gram matrices so that no update is formed. The top-$k$ pairs are combined with additive gates. In the backward pass, the same latent algebra yields the exact first-order usefulness of \emph{every} pair; a one-line straight-through core turns it into a dense router gradient, and each step on the compass's output map provably turns it toward the descent direction. Our contributions:
\begin{itemize}[leftmargin=*,itemsep=1pt,topsep=2pt]
\item \textbf{Analysis.} We identify first-order usefulness as the target of routing and show that token routers are capped by a closed-form ceiling, reader-side scores are writer-blind, and update scores are exact (\S\ref{sec:problem}).
\item \textbf{Method.} \method{} combines reader--writer experts, latent update scores, additive gates, and an exact dense router gradient; its routing is invariant to how experts are parameterized (\S\ref{sec:method}).
\item \textbf{Evidence.} On two Llama backbones, \method{} beats twelve baselines in single-domain and multi-task adaptation with half the parameters of 8-expert mixtures, and its scores track measured usefulness ($\rho=\rhoVane$ vs.\ $\le\rhoLat$ for token routers; \S\ref{sec:experiments}).
\end{itemize}

\section{Related Work}
\label{sec:related}

\paragraph{Mixtures of LoRA experts.}
LoRAMoE, MixLoRA, and MoLE route tokens among paired LoRA experts~\citep{dou2024loramoe,li2024mixlora,wu2024mole}. Later work allocates experts by depth~\citep{gao2024mola,liao2025hmora}, aligns their optimization with full fine-tuning~\citep{fan2025goat}, makes $k$ dynamic~\citep{li2025dynmole,zhuang2026ldmole,saliencro2026care,saliencro2026vimole}, lets experts communicate~\citep{mu2026talklora}, injects semantic priors~\citep{shi2026samora}, or uses heterogeneous and spectral experts~\citep{cao2026moa,jiang2026fouriermoe,saliencro2026frame}; others drop the router~\citep{zha2026cdlora,liu2025alignlora}, and clean routing alone does not prevent interference inside an adapter~\citep{chowdhury2026routing}. These methods improve \emph{how} the router decides; we specify \emph{what} it should estimate.

\paragraph{Shared and combinatorial factors.}
LoRA's $A$ extracts features and $B$ produces the output~\citep{zhu2024asymmetry}. HydraLoRA shares one $A$ across $B$ heads~\citep{tian2024hydralora}, MALoRA shares a down-projection subspace~\citep{wang2025malora}, MoSLoRA mixes the two subspaces~\citep{wu2024moslora}, and CoMoL keeps experts as core matrices between shared factors~\citep{cao2026comol}. Mix-and-match designs build many experts from few pieces: rank-one experts~\citep{zhang2024trex,zhao2025smora}, residual trees~\citep{zeng2025smore}, shards and vector banks~\citep{wang2025mos,li2024vblora}, and factorized or product-key expert tensors~\citep{oldfield2024mumoe,lample2019product,he2024peer}. \method{} uses the simplest such structure, readers times writers, and is the first to exploit it for exact scoring.

\paragraph{Coupling routing and computation.}
Autonomy-of-Experts ranks experts by the norm of their own low-rank activations~\citep{lv2025aoe}. FlyLoRA routes with a frozen sparse projection~\citep{zou2025flylora}, FURINA by the angle between input and adapter~\citep{han2025furina}, Self-Routing by a subspace of the hidden state~\citep{mohamud2026selfrouting}, and CoMoL and PASs-MoE in the adapters' latent space~\citep{cao2026comol,hou2026passmoe}; all read the input side only. Default MoE densifies the router gradient with moving averages of expert outputs~\citep{panda2025dense}, and GRIN estimates gradients through discrete routing~\citep{liu2024grin}. \method{}'s scores involve the writers and a forecast of the loss gradient, and its dense gradient is exact. Appendix~\ref{app:related} extends this review.

\section{What Should a Router Compute?}
\label{sec:problem}

\paragraph{Setup.}
Let $W_0\in\R^{\dout\times d}$ be a frozen linear layer with input $x\in\R^d$ and output $y$. A \emph{combinatorial} low-rank adapter keeps $N_A$ readers $A_a\in\R^{r\times d}$ and $N_B$ writers $B_b\in\R^{\dout\times r}$. Each pair $(a,b)$ is an implicit expert with update $v_{ab}(x)=B_bz_a$, where $z_a=A_ax$ is the reader's latent code. With gates $g_{ab}(x)\ge0$ on a set $\mathcal{S}(x)$ of $k$ active pairs, the layer computes
\begin{equation}
y=W_0x+\tfrac{\alpha}{r}\textstyle\sum_{(a,b)\in\mathcal{S}(x)}g_{ab}(x)\,B_bA_ax.
\label{eq:layer}
\end{equation}
Paired MoE-LoRA is the special case $N_A=N_B$ with only diagonal pairs allowed, and HydraLoRA the case $N_A=1$ (Appendix~\ref{app:unify}). Let $\delta(x)=-\nabla_y\ell$ be the descent direction of the token's loss $\ell$ at this layer's output.

\paragraph{Usefulness.}
Adding $\eta\,v_{ab}$ to $y$ changes the loss by $-\eta\,u_{ab}+O(\eta^2)$, where
\begin{equation}
u_{ab}(x)=\ip{\delta(x)}{B_bA_ax}=\ip{B_b^{\top}\delta(x)}{z_a}.
\label{eq:useful}
\end{equation}
We call $u_{ab}$ the \emph{usefulness} of pair $(a,b)$ for the token. Under additive gates, the first-order gain of a selection is $\sum_{\mathcal{S}}g_{ab}u_{ab}$, so a router should rank pairs by their usefulness. It must do so from $x$ alone, because $\delta$ depends on the label.

\begin{proposition}[Usefulness lives in the latent space]
\label{prop:latent}
For every token, the $N_A\times N_B$ usefulness matrix is $Z^{\top}W$ with $Z=[z_1,\dots,z_{N_A}]$ and $W=[B_1^{\top}\delta,\dots,B_{N_B}^{\top}\delta]$. The squared update norms are $\|v_{ab}\|^2=z_a^{\top}G_bz_a$ with $G_b=B_b^{\top}B_b$. All of these cost $O(r(N_Ad+N_B\dout)+N_AN_Br^2)$ operations, whereas materializing the updates costs $O(N_AN_Br\dout)$.
\end{proposition}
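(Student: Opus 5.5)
The proof is a direct algebraic verification followed by an operation count. We treat the three claims in order: the matrix identity, the norm identity, and the cost comparison.

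\textbf{Usefulness matrix.} Start from the definition \eqref{eq:useful}. Since $\ip{\delta}{B_bA_ax}=\delta^{\top}B_bz_a=(B_b^{\top}\delta)^{\top}z_a$, the $(a,b)$ entry equals $z_a^{\top}w_b$ with $w_b=B_b^{\top}\delta\in\R^r$. Stacking the $z_a$ as the columns of $Z\in\R^{r\times N_A}$ and the $w_b$ as the columns of $W\in\R^{r\times N_B}$ gives $(Z^{\top}W)_{ab}=z_a^{\top}w_b=u_{ab}$. This proves the first claim.

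\textbf{Update norms.} By definition $\|v_{ab}\|^2=(B_bz_a)^{\top}(B_bz_a)=z_a^{\top}B_b^{\top}B_bz_a=z_a^{\top}G_bz_a$. This proves the second claim.

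\textbf{Cost.} We count multiply-adds for each ingredient.
\begin{itemize}[leftmargin=*,itemsep=1pt,topsep=2pt]
\item Each $z_a=A_ax$ costs $O(rd)$, so $Z$ costs $O(N_Ard)$.
\item Each $w_b=B_b^{\top}\delta$ costs $O(r\dout)$, so $W$ costs $O(N_Br\dout)$.
\item The Gram matrices $G_b$ depend only on parameters, not on the token. They can therefore be computed once per step, at $O(N_Br^2\dout)$, and amortized across all tokens in the batch. This is the one point that needs care: charged per token, they would add a term $N_Br^2\dout$ that is not in the stated bound.
\item Given $Z$ and $W$, the product $Z^{\top}W$ costs $O(N_AN_Br)$.
\item For the norms, first form $G_bz_a$ for all pairs at $O(N_AN_Br^2)$, then take one inner product of length $r$ per pair, which is dominated by the previous term.
\end{itemize}
Summing gives $O(r(N_Ad+N_B\dout)+N_AN_Br^2)$.

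For comparison, materializing all updates $v_{ab}=B_bz_a$ costs $O(r\dout)$ per pair once $Z$ is available, hence $O(N_AN_Br\dout)$ in total. This term dominates the latent route whenever $r\ll\dout$.
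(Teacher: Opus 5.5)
Your proposal is correct and follows essentially the same route as the paper. Both use associativity to write $u_{ab}=w_b^{\top}z_a$ and stack into $Z^{\top}W$, expand $\|v_{ab}\|^2=z_a^{\top}G_bz_a$ directly, and make the same operation counts, including amortizing the Gram matrices once per forward pass across the batch. The paper also records the first-order expansion $\ell(y+\eta v_{ab})=\ell(y)-\eta u_{ab}+o(\eta)$ and an extra $N_AN_B\dout$ for the inner products with $\delta$ on the materialization side; neither affects the statement or its bounds.
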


\paragraph{Which routers can see usefulness?}
To say what a router must represent, we model the predictable part of the descent direction as linear in the token.

\begin{theorem}[Router expressivity]
\label{thm:router}
Let $\delta(x)=\Gamma x+\xi$ with $\E[\xi\mid x]=0$ and $x\sim\mathcal{N}(\mu,\Sigma)$. Then $\bar u_{ab}(x):=\E[u_{ab}\mid x]=x^{\top}M_{ab}x$ with $M_{ab}=\Gamma^{\top}B_bA_a$. Let $S=\tfrac12(M_{ab}+M_{ab}^{\top})$.
\begin{enumerate}[leftmargin=*,itemsep=0pt,topsep=1pt,label=(\roman*)]
\item \emph{Token routers.} Every score affine in $x$ satisfies
\begin{equation*}
\mathrm{corr}^2(s,\bar u_{ab})\le\frac{4\mu^{\top}S\Sigma S\mu}{4\mu^{\top}S\Sigma S\mu+2\mathrm{tr}((S\Sigma)^2)},
\end{equation*}
with equality for the least-squares router. This class includes linear routers over pairs, product routers, and linear routers applied to reader latents. The ceiling is $0$ when $\mu=0$.
\item \emph{Reader-side scores.} A score that depends on the token only through $A_ax$ gives the same value to every writer of reader $a$. Within a reader, it therefore captures only the gain of a uniformly random writer.
\item \emph{Update scores.} The score $\ip{e(x)}{v_{ab}(x)}$ with $e(x)=UV^{\top}x$ equals $\bar u_{ab}$ for every pair and token if $UV^{\top}=\Gamma$. This is attainable whenever the rank $r_e\ge\mathrm{rank}\,\Gamma$, and then the top-$k$ pairs under the score maximize the expected first-order gain among all selections based on $x$.
\end{enumerate}
\end{theorem}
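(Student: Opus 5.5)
The first identity follows directly from the model. Conditioning on $x$ and using $\E[\xi\mid x]=0$ gives $\bar u_{ab}(x)=\ip{\Gamma x}{B_bA_ax}=x^{\top}\Gamma^{\top}B_bA_ax=x^{\top}M_{ab}x$. Since a quadratic form only sees the symmetric part, $\bar u_{ab}=x^{\top}Sx$. I then treat the three parts separately.

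\emph{Part (i).} Write $x=\mu+w$ with $w\sim\mathcal{N}(0,\Sigma)$. Then
\begin{equation*}
\bar u_{ab}=\mu^{\top}S\mu+2\mu^{\top}Sw+w^{\top}Sw .
\end{equation*}
An affine score $s=c+\beta^{\top}x$ equals $\beta^{\top}w$ up to a constant. The maximal squared correlation of $\bar u$ with any linear function of $w$ is the $R^2$ of its linear projection, namely $\mathrm{Cov}(w,\bar u)^{\top}\Sigma^{-1}\mathrm{Cov}(w,\bar u)/\mathrm{Var}(\bar u)$. Equality holds exactly for the least-squares router, by Cauchy--Schwarz in the $\Sigma$ inner product.

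The Gaussian moments supply the two ingredients:
\begin{itemize}[leftmargin=*,itemsep=0pt,topsep=1pt]
\item Third moments vanish, so $\mathrm{Cov}(w,w^{\top}Sw)=0$ and $\mathrm{Cov}(w,\bar u)=2\Sigma S\mu$. The numerator is therefore $4\mu^{\top}S\Sigma S\mu$.
\item The linear and quadratic parts are uncorrelated, and $\mathrm{Var}(w^{\top}Sw)=2\,\mathrm{tr}((S\Sigma)^2)$ (Isserlis). Hence $\mathrm{Var}(\bar u)=4\mu^{\top}S\Sigma S\mu+2\,\mathrm{tr}((S\Sigma)^2)$.
\end{itemize}
This yields the stated ceiling, which is $0$ at $\mu=0$.

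To see that the listed routers fall in this class, I would note the following. A linear router over pairs outputs $\theta_{ab}^{\top}x$. A product router outputs sums $\theta_a^{\top}x+\phi_b^{\top}x$, or logits that are otherwise affine per pair. A linear map of $z_a=A_ax$ is $\theta^{\top}A_ax$. Each of these is affine in $x$ for fixed $(a,b)$. If $\Sigma$ is singular, I would replace $\Sigma^{-1}$ by a pseudoinverse; this changes nothing because $\Sigma S\mu$ lies in the range of $\Sigma$.

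\emph{Part (ii).} If $s_{ab}(x)=f_{ab}(A_ax)$ is meant to be writer-independent, then by definition it is $f_a(A_ax)$ and is constant in $b$. Ranking within reader $a$ is then arbitrary. The expected gain of choosing a writer by such a score, conditioned on the score alone, is therefore the average $\frac1{N_B}\sum_b\bar u_{ab}$, i.e.\ that of a uniformly random writer. I would state this as the interpretation of "captures only". The only subtlety is formalizing that the score carries no $b$-information, which the hypothesis gives directly.

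\emph{Part (iii).} With $UV^{\top}=\Gamma$ we have $\ip{e(x)}{B_bA_ax}=x^{\top}\Gamma^{\top}B_bA_ax=\bar u_{ab}$. A rank factorization of $\Gamma$ with inner dimension $\mathrm{rank}\,\Gamma\le r_e$ exists, padding with zero columns if needed.

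For optimality, take any selection rule $\mathcal{S}(x)$ of $k$ pairs with fixed or bounded equal gates. Its expected first-order gain is
\begin{equation*}
\E\Big[\textstyle\sum_{\mathcal{S}(x)}u_{ab}\Big]=\E\Big[\textstyle\sum_{\mathcal{S}(x)}\bar u_{ab}(x)\Big],
\end{equation*}
by the tower property, since $\mathcal{S}$ is $x$-measurable. Pointwise in $x$, this sum is maximized by the top-$k$ values of $\bar u_{ab}(x)$. The step needing care is the gate convention: the claim is cleanest for unit gates. With general gates I would assume the gates are common across pairs, or normalized, and then apply the rearrangement inequality.

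I expect the main obstacle to be the moment computation in (i), specifically getting the constants $4$ and $2$ right, together with the non-symmetric $M_{ab}$. Symmetrizing at the start handles the latter.
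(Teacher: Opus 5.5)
Your proposal is correct and follows the paper's proof essentially step for step. For (i), both use the decomposition $x=\mu+\varepsilon$, vanishing Gaussian third moments, $\mathrm{Var}(\varepsilon^{\top}S\varepsilon)=2\,\mathrm{tr}((S\Sigma)^2)$, and Cauchy--Schwarz in the $\Sigma$ inner product. For (ii), both use the writer-blindness argument, and for (iii), the exact factorization $UV^{\top}=\Gamma$ followed by the tower rule and pointwise top-$k$ maximization. Your additional remarks (the pseudoinverse for singular $\Sigma$ and the unit-gate convention in (iii)) are harmless refinements of points the paper leaves implicit.
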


Token routers thus work to the extent that the token mean interacts with usefulness. Pretrained representations are strongly anisotropic~\citep{ethayarajh2019contextual}, which is why token routers work at all, but the token-specific part of usefulness stays invisible to them; Figure~\ref{fig:motivation}a matches the closed form to within $0.002$. The remedy in (iii) is not a larger router but a different object: a vector in output space whose inner product with each candidate update predicts its effect on the loss. Proofs are in Appendix~\ref{app:proofs}, and Appendix~\ref{app:numerics} checks them numerically.

\section{\method{}}
\label{sec:method}

\begin{figure*}[t]
\centering
\includegraphics[width=\textwidth]{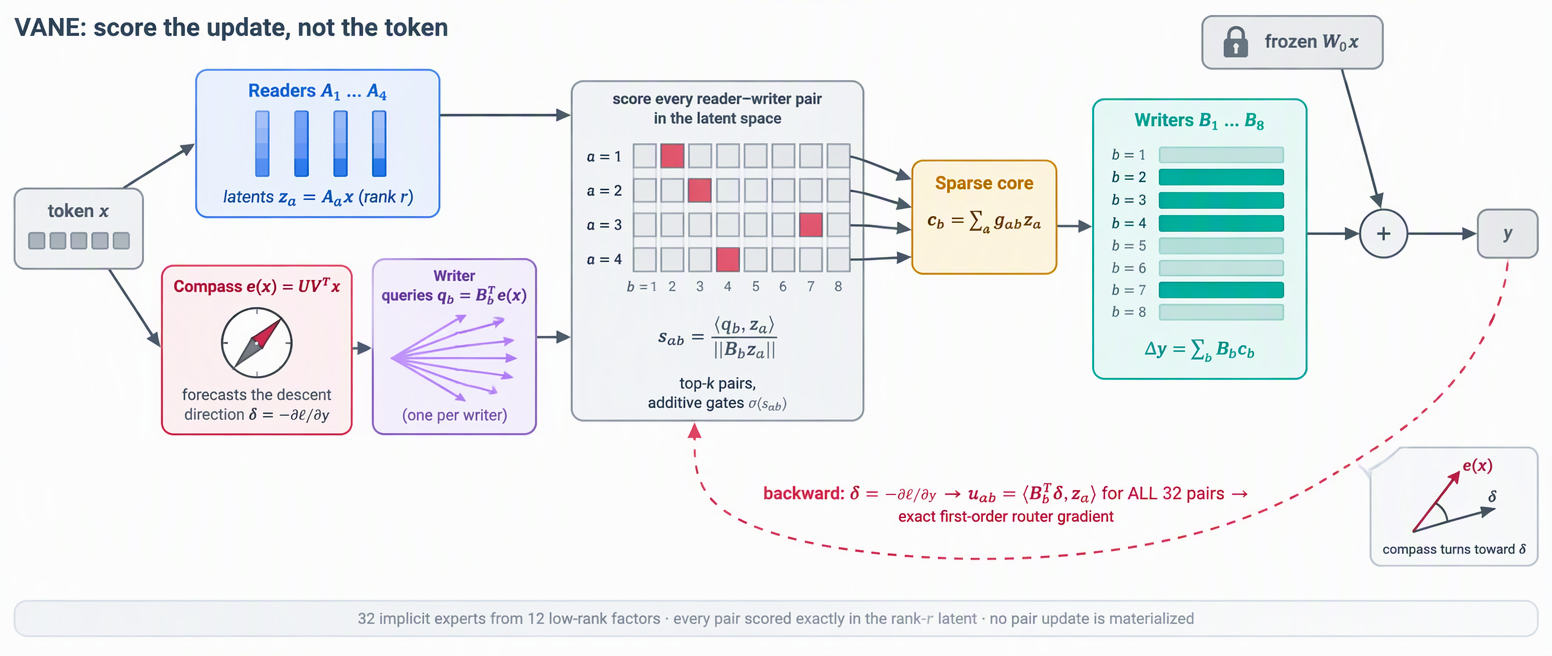}
\caption{\textbf{\method{} for one token.} Readers produce rank-$r$ latents. A low-rank compass forecasts the descent direction at the layer output, and each writer turns it into a query in the same latent space. Every reader--writer pair is scored by the alignment between its update and the compass without materializing the update. The top-$k$ pairs are summed with additive gates. In the backward pass, the exact first-order usefulness of every pair comes from the same latents and gives the router a dense gradient.}
\label{fig:overview}
\end{figure*}

\method{} instantiates Theorem~\ref{thm:router}(iii) inside the adapter of Eq.~\eqref{eq:layer} (Figure~\ref{fig:overview}, Algorithm~\ref{alg:vane}).

\paragraph{Compass and writer queries.}
The compass $e(x)=UV^{\top}x$, with $U\in\R^{\dout\times r_e}$ and $V\in\R^{d\times r_e}$, forecasts the descent direction. It is never added to the output; it only scores updates. Each writer turns it into a latent query
\begin{equation}
q_b(x)=B_b^{\top}e(x)=K_b\,V^{\top}x,
\label{eq:query}
\end{equation}
where the $r\times r_e$ key $K_b=B_b^{\top}U$ is computed once per forward pass. Queries move with their writers, so the router never lags behind the experts.

\paragraph{Calibrated update scores.}
The alignment between the compass and a pair's update is $\ip{e}{v_{ab}}=\ip{q_b}{z_a}$ by Eq.~\eqref{eq:useful}. \method{} scores alignment per unit of update norm,
\begin{equation}
s_{ab}(x)=\frac{\ip{q_b(x)}{z_a(x)}}{\sqrt{z_a^{\top}G_bz_a}+\epsilon},
\label{eq:score}
\end{equation}
that is, the length of the compass projected onto the direction of $v_{ab}$. Calibration makes routing depend only on update directions (Proposition~\ref{prop:invariance}). Without it, writers that grow are selected more often, receive more gradient, and grow further. Magnitude still reaches the output through $v_{ab}$. Scores use stop-gradients on $z_a$ and $B_b$, so the compass is the only router parameter.

\paragraph{Sparse additive core.}
The active set $\mathcal{S}(x)$ holds the $k$ highest-scoring pairs, with gates $g_{ab}=\sigma(s_{ab})$. Each writer receives the gated latents $c_b=\sum_{a:(a,b)\in\mathcal{S}}g_{ab}z_a$, and
\begin{equation}
y=W_0x+\tfrac{\alpha}{r}\textstyle\sum_bB_bc_b,
\label{eq:core}
\end{equation}
so the layer is two dense low-rank products around an $N_A\times N_B$ sparse core. Rank-wise mixtures are the special case of a diagonal core over rank-one factors~\citep{zhao2025smora,zou2025flylora}. The gates are additive sigmoids, not softmax-normalized: first-order gains add up, and complementary pairs should stack rather than be averaged (Proposition~\ref{prop:rank}; cf.\ \citealp{nguyen2025sigmoid}).

\paragraph{Exact dense router gradient.}
A top-$k$ router normally learns only from the pairs it selected. \method{} replaces the gate of each unselected pair in $c_b$ by $g_{ab}-\sg(g_{ab})$, which is zero in the forward pass but has the derivative of an active gate. Every pair then receives the gradient $-\tfrac{\alpha}{r}\sigma'(s_{ab})\,u_{ab}$, its exact first-order effect had it been active, at $O(N_AN_Br)$ extra cost per token and without materializing any expert output. Default MoE approximates it with averaged past outputs~\citep{panda2025dense}.

\paragraph{Balancing.}
Parameters live in readers and writers, so we balance the two marginals. Let $f^A_a,f^B_b$ be the fractions of active pairs that use reader $a$ or writer $b$, and $P^A_a,P^B_b$ the marginals of $\mathrm{softmax}(s)$. We add $\lambda_{\mathrm{bal}}\mathcal{L}_{\mathrm{bal}}$ with $\mathcal{L}_{\mathrm{bal}}=N_A\sum_af^A_aP^A_a+N_B\sum_bf^B_bP^B_b$~\citep{fedus2022switch}, which leaves the joint pair distribution free to specialize. Algorithm~\ref{alg:vane} (Appendix~\ref{app:impl}) summarizes one training step.

\paragraph{Properties.}
Three guarantees follow (Appendix~\ref{app:proofs}).

\begin{proposition}[The compass turns toward the descent]
\label{prop:dense}
With the straight-through core, $\partial\ell/\partial s_{ab}=-\tfrac{\alpha}{r}\sigma'(s_{ab})u_{ab}$ for all pairs. For a single token, a gradient step of size $\eta$ on $U$ changes the compass by $\Delta e=\eta\tfrac{\alpha}{r}\|h\|^2P\delta$ with $P=\sum_{ab}\sigma'(s_{ab})\,v_{ab}v_{ab}^{\top}/(\|v_{ab}\|+\epsilon)\succeq0$. Hence $\ip{\Delta e}{\delta}\ge0$: each such step turns the compass toward $\delta$ within the span of the candidate updates.
\end{proposition}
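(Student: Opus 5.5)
The plan is to compute the gradient by the chain rule in two stages: first from the loss to the scores through the core of Eq.~\eqref{eq:core}, then from the scores to $U$ through Eq.~\eqref{eq:score}. Throughout I treat the active set $\mathcal{S}(x)$ as locally constant, since top-$k$ is piecewise constant in $s$ and the derivative is taken inside a piece. I consider only the task loss $\ell$, so the balancing term is excluded.

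\emph{Step 1: the score gradient.} For an active pair, $g_{ab}$ enters $y$ only through the summand $\tfrac{\alpha}{r}g_{ab}B_bz_a=\tfrac{\alpha}{r}g_{ab}v_{ab}$. Hence
\[
\partial\ell/\partial g_{ab}=\ip{\nabla_y\ell}{\tfrac{\alpha}{r}v_{ab}}=-\tfrac{\alpha}{r}u_{ab},
\]
using $\delta=-\nabla_y\ell$ and Eq.~\eqref{eq:useful}. For an unselected pair, the straight-through gate $g_{ab}-\sg(g_{ab})$ vanishes in the forward pass, so $y$ is unchanged. Its derivative with respect to $g_{ab}$ is still $1$, so the same expression holds. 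Multiplying by $\partial g_{ab}/\partial s_{ab}=\sigma'(s_{ab})$ gives the first claim for all pairs.

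\emph{Step 2: the step on $U$.} Write $h=V^{\top}x$, so that $e=Uh$ and $q_b=B_b^{\top}Uh$. Because $z_a$ and $B_b$ are stop-gradiented in the score, the numerator of Eq.~\eqref{eq:score} is $\ip{Uh}{v_{ab}}$ with $v_{ab}$ treated as a constant. The denominator $\sqrt{z_a^{\top}G_bz_a}+\epsilon=\|v_{ab}\|+\epsilon$ (Proposition~\ref{prop:latent}) is also constant. Hence
\[
\partial s_{ab}/\partial U=v_{ab}h^{\top}/(\|v_{ab}\|+\epsilon).
\]
Summing against Step 1 gives
\[
\nabla_U\ell=-\tfrac{\alpha}{r}\textstyle\sum_{ab}\sigma'(s_{ab})\,u_{ab}\,v_{ab}h^{\top}/(\|v_{ab}\|+\epsilon).
\]
Substituting $u_{ab}=v_{ab}^{\top}\delta$ turns this into $-\tfrac{\alpha}{r}P\delta h^{\top}$. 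The step $\Delta U=-\eta\nabla_U\ell$ with $V$ held fixed then yields
\[
\Delta e=\Delta U\,h=\eta\tfrac{\alpha}{r}\|h\|^2P\delta .
\]

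\emph{Step 3: positivity.} Each term of $P$ is a rank-one outer product $v_{ab}v_{ab}^{\top}$ scaled by $\sigma'(s_{ab})/(\|v_{ab}\|+\epsilon)>0$, so $P\succeq0$. Therefore
\[
\ip{\Delta e}{\delta}=\eta\tfrac{\alpha}{r}\|h\|^2\,\delta^{\top}P\delta\ge0 .
\]
The range of $P$ is the span of the candidate updates $v_{ab}$, which is the sense of ``within the span''.

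The main obstacle is bookkeeping rather than any estimate. I must check that the stop-gradients really make $U$ the only path into $s_{ab}$, so that no term through $z_a$ or $B_b$ appears. I must also check that the straight-through substitution gives unselected pairs exactly the active-gate derivative without perturbing the forward output. Finally, the denominator in Eq.~\eqref{eq:score} has to match $\|v_{ab}\|+\epsilon$ as written in $P$.
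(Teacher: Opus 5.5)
Your proposal is correct and follows essentially the same route as the paper. The paper uses the straight-through identity $\partial\tilde g_{ab}/\partial g_{ab}=1$ for every pair and then the chain rule through the score, which is linear in $U$. It passes through $\nabla_e\ell=-\tfrac{\alpha}{r}P\delta$ and $\nabla_U\ell=(\nabla_e\ell)h^{\top}$; this is the same computation as your direct $\partial s_{ab}/\partial U$. Positivity then follows because $P$ is a positive combination of rank-one outer products. One small correction to your closing checklist: the stop-gradients on $z_a$ and $B_b$ play no role in the $U$-gradient, since neither depends on $U$. They only determine what gradient reaches the experts.
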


\begin{proposition}[Routing depends on what experts do]
\label{prop:invariance}
For any $M\in\mathrm{GL}(r)$ and $c_a,c'_b>0$, the map $A_a\mapsto c_aMA_a$, $B_b\mapsto c'_bB_bM^{-1}$ leaves every score in Eq.~\eqref{eq:score} unchanged (as $\epsilon\to0$). With $c\equiv1$ it also leaves the layer output unchanged. Reader-norm scores $\|A_ax\|$ and latent-linear scores $w^{\top}A_ax$ are not invariant under $M$.
\end{proposition}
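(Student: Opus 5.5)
The plan is to prove the three claims of Proposition~\ref{prop:invariance} separately by direct substitution, using the latent formulas of Proposition~\ref{prop:latent}.

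\textbf{Scores.} Write $A'_a=c_aMA_a$ and $B'_b=c'_bB_bM^{-1}$. Then the latent code becomes $z'_a=c_aMz_a$. The writer Gram matrix becomes $G'_b=c_b'^2M^{-\top}G_bM^{-1}$, and the key becomes $K'_b=B_b'^{\top}U=c'_bM^{-\top}K_b$, so the query is $q'_b=c'_bM^{-\top}q_b$. The compass $e(x)=UV^{\top}x$ is untouched because $U,V$ are not reparameterized.

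The numerator of Eq.~\eqref{eq:score} is then
\[
\ip{q'_b}{z'_a}=c_ac'_b\,q_b^{\top}M^{-1}Mz_a=c_ac'_b\ip{q_b}{z_a}.
\]
For the denominator, $z_a'^{\top}G'_bz'_a=c_a^2c_b'^2\,z_a^{\top}G_bz_a$. Equivalently, $v'_{ab}=B'_bz'_a=c_ac'_bv_{ab}$, so $\|v'_{ab}\|=c_ac'_b\|v_{ab}\|$, using $c_a,c'_b>0$. As $\epsilon\to0$, both the numerator and the denominator scale by $c_ac'_b$, and the score is unchanged.

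Put geometrically, the score is $\ip{e}{v_{ab}}/\|v_{ab}\|$, which depends only on the direction of $v_{ab}$. The reparameterization rescales each $v_{ab}$ by a positive factor.

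\textbf{Output.} When $c\equiv1$, each pair satisfies $B'_bA'_a=B_bM^{-1}MA_a=B_bA_a$. Since the scores are unchanged, the top-$k$ set $\mathcal{S}(x)$ and the gates $\sigma(s_{ab})$ are unchanged. Hence Eq.~\eqref{eq:layer} and Eq.~\eqref{eq:core} give the same $y$.

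\textbf{Non-invariance of the alternatives.} Here the plan is to exhibit counterexamples rather than prove anything general.

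\emph{Reader norms.} Take $M=\mathrm{diag}(t,1,\dots,1)$. Then $\|MA_ax\|$ changes whenever the first coordinate of $z_a$ is nonzero. It can also reverse the ranking of two readers: pick $z_1=(1,0,\dots)$ and $z_2=(0,2,0,\dots)$, and let $t>2$.

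\emph{Latent-linear scores.} For a fixed $w$, the score $w^{\top}MA_ax$ differs from $w^{\top}A_ax$ unless $M^{\top}w=w$. Pick $M$ with $M^{\top}w\ne w$, for instance a rotation that moves $w$, and pick $x$ with $A_ax$ not orthogonal to $(M^{\top}-I)w$.

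\textbf{Main obstacle.} The only delicate point is the limit $\epsilon\to0$. It requires $\|v_{ab}\|>0$, that is, $B_bz_a\neq0$, so that the ratio is well defined. I would state this as a standing assumption, noting that pairs with $v_{ab}=0$ have score $0$ for every $\epsilon$ under either parameterization. The stop-gradients play no role in the forward values, so they do not affect any of these arguments.
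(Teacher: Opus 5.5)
Your proposal is correct and follows the paper's proof essentially step for step. Both arguments use direct substitution to get $v'_{ab}=c_ac'_bv_{ab}$, with the matching latent identities for $q'_b$ and $G'_b$, and both use $B'_bA'_a=B_bA_a$ when $c\equiv1$. For non-invariance, both use a diagonal $M$ for reader norms and the identity $w^{\top}MA_ax=(M^{\top}w)^{\top}A_ax$ for latent-linear scores. Your standing assumption $v_{ab}\neq0$ for the $\epsilon\to0$ limit is a small clarification that the paper leaves implicit.
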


\begin{proposition}[Pair gates are needed for rank]
\label{prop:rank}
If the stacked readers have full row rank and the stacked writers full column rank, the token update in Eq.~\eqref{eq:layer} has rank $r\cdot\mathrm{rank}(G_{\mathcal{S}})\le r\min(k,N_A,N_B)$, where $G_{\mathcal{S}}$ is the writer-by-reader matrix of active gates. Separable gates $g_{ab}=\alpha_a\beta_b$ collapse this to rank $r$. Additive gates reach the first-order gain $\sum_{\mathcal{S}}g_{ab}u_{ab}$, while softmax-normalized gates are capped at $\max_{\mathcal{S}}u_{ab}$.
\end{proposition}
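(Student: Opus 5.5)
The proof has three parts, matching the three claims of Proposition~\ref{prop:rank}: a rank computation by factorization, a one-line observation for separable gates, and a linear-program comparison of additive and softmax gates.

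\emph{Rank.} Stack the readers into $\mathbf{A}=[A_1;\dots;A_{N_A}]\in\R^{N_Ar\times d}$ and the writers into $\mathbf{B}=[B_1,\dots,B_{N_B}]\in\R^{\dout\times N_Br}$. Collect the active gates into $G_{\mathcal{S}}\in\R^{N_B\times N_A}$, with entry $g_{ab}$ if $(a,b)\in\mathcal{S}$ and $0$ otherwise. The token update matrix in Eq.~\eqref{eq:layer} is then
\begin{equation*}
\Delta W(x)=\tfrac{\alpha}{r}\sum_{(a,b)\in\mathcal{S}}g_{ab}B_bA_a=\tfrac{\alpha}{r}\,\mathbf{B}\,(G_{\mathcal{S}}\otimes I_r)\,\mathbf{A}.
\end{equation*}
To check this, read block $(b,a)$ of the middle factor: it is $g_{ab}I_r$. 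Full column rank of $\mathbf{B}$ means $\mathbf{B}$ is injective, and full row rank of $\mathbf{A}$ means $\mathbf{A}$ is surjective onto $\R^{N_Ar}$. Hence
\begin{equation*}
\mathrm{rank}\,\Delta W=\mathrm{rank}(G_{\mathcal{S}}\otimes I_r)=r\,\mathrm{rank}(G_{\mathcal{S}}).
\end{equation*}
Because $G_{\mathcal{S}}$ is $N_B\times N_A$ with at most $k$ nonzero entries, $\mathrm{rank}(G_{\mathcal{S}})\le\min(k,N_A,N_B)$.

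\emph{Separable gates.} If $g_{ab}=\alpha_a\beta_b$ on the active set, then, restricted to the support, $G_{\mathcal{S}}=\beta\alpha^{\top}$ has rank at most $1$. I read the statement as taking the separable gates on a product support $\mathcal{S}=\mathcal{S}_A\times\mathcal{S}_B$, so that $G_{\mathcal{S}}$ really is an outer product; on an arbitrary support the masked outer product could have higher rank. With this reading the rank bound gives rank at most $r$. A direct check also works: $\Delta W=\tfrac{\alpha}{r}\big(\sum_b\beta_bB_b\big)\big(\sum_a\alpha_aA_a\big)$, a product of a $\dout\times r$ and an $r\times d$ matrix.

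\emph{Additive versus softmax gates.} With additive gates, the first-order loss change of the selection is the sum of its parts. Adding the gated updates changes $y$ by $\sum_{\mathcal{S}}g_{ab}v_{ab}$, so by Eq.~\eqref{eq:useful} and linearity of the inner product the gain is $\sum_{\mathcal{S}}g_{ab}u_{ab}$. Each $g_{ab}\in(0,1)$ is free, so positive usefulness values stack across pairs. Softmax-normalized gates satisfy $g\ge0$ and $\sum_{\mathcal{S}}g_{ab}=1$. The gain $\sum_{\mathcal{S}}g_{ab}u_{ab}$ is then a convex combination of the $u_{ab}$, hence at most $\max_{\mathcal{S}}u_{ab}$. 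Equality is approached only by concentrating all mass on the best pair, which shows the cap is tight.

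The main obstacle is the rank step, specifically checking the Kronecker-block identity and that the two full-rank hypotheses exactly cancel the outer factors. The remaining claims are one-line consequences.
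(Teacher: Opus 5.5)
Your proof is correct and follows the paper's argument essentially step for step: the factorization $\tilde B(G_{\mathcal{S}}\otimes I_r)\tilde A$ with an injective left factor and a surjective right factor, the bound on $\mathrm{rank}(G_{\mathcal{S}})$ by its rows, columns, and nonzero entries, the rank-one outer product for separable gates (the paper also assumes a product support), and the convex-combination cap for softmax gates. The only cosmetic difference is that you use the full stacks where the paper restricts to the readers and writers that appear in $\mathcal{S}$, which changes nothing because sub-stacks of full-rank stacks are still full rank.
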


\paragraph{Cost.}
With $N_A{=}4$, $N_B{=}8$, $r{=}4$, and $r_e{=}8$, a \method{} layer has 32 implicit experts and $r(N_Ad+N_B\dout)+r_e(d+\dout)$ parameters. That is \pVane\% of Llama-3.1-8B, versus \pMoE\% for eight paired experts of rank 8. Its compute is dominated by two dense products of rank 16 and 32. Scoring every pair adds $O(r_ed+N_AN_Br^2)$ per token, and there is no gather or scatter over experts.

\begin{table*}[t]
\centering
\footnotesize
\setlength{\tabcolsep}{4.1pt}
\renewcommand{\arraystretch}{0.93}
\begin{tabular}{l c cccccccc c c}
\toprule
\textbf{Method} & \textbf{Param\,\%} & \textbf{BoolQ} & \textbf{PIQA} & \textbf{SIQA} & \textbf{HellaS.} & \textbf{WinoG.} & \textbf{ARC-e} & \textbf{ARC-c} & \textbf{OBQA} & \textbf{Avg.} & $\bm{\Delta}$ \\
\midrule
\multicolumn{12}{c}{\textit{\textbf{Llama-3.2-3B}}}\\
\midrule
LoRA & 0.76 & 67.9 & 83.2 & 76.8 & 88.6 & 79.4 & 84.7 & 71.3 & 79.6 & 78.9 & -- \\
DoRA & 0.78 & 68.5 & 83.8 & 77.4 & 88.9 & 80.1 & 85.1 & 72.2 & 80.2 & 79.5 & +0.6 \\
MoSLoRA & 0.76 & 68.5 & 83.5 & 77.3 & 88.9 & 80.0 & 85.1 & 72.0 & 80.2 & 79.4 & +0.5 \\
CoMoL & 0.77 & 69.2 & 84.1 & 77.8 & 89.4 & 80.6 & 85.4 & 73.0 & 80.7 & 80.0 & +1.1 \\
FlyLoRA & 0.77 & 69.1 & 84.2 & 78.0 & 89.3 & 80.8 & 85.8 & 73.5 & 81.1 & 80.2 & +1.3 \\
\cmidrule(lr){1-12}
HydraLoRA & 1.92 & 68.9 & 84.0 & 77.8 & 89.2 & 80.7 & 85.6 & 72.8 & 80.6 & 80.0 & +1.1 \\
MixLoRA & 2.13 & 69.3 & 84.2 & 78.1 & 89.2 & 80.8 & 85.6 & 72.9 & 81.0 & 80.1 & +1.2 \\
MoLA & 2.01 & 69.1 & 83.9 & 77.9 & 89.3 & 80.7 & 85.6 & 73.3 & 80.6 & 80.1 & +1.2 \\
HMoRA & 3.40 & 69.7 & 84.5 & 77.8 & 89.6 & 81.2 & 86.0 & 73.5 & 81.2 & 80.4 & +1.5 \\
GOAT & 3.22 & 69.7 & 84.5 & 78.3 & \underline{89.8} & 81.4 & 85.8 & 74.0 & \underline{81.5} & 80.6 & +1.7 \\
SMoRA & 4.52 & 69.3 & 84.4 & 78.2 & 89.7 & 80.7 & 85.9 & 73.4 & 81.1 & 80.3 & +1.4 \\
LD-MoLE & 3.22 & \underline{70.1} & \textbf{85.0} & 78.2 & \textbf{89.9} & \underline{82.0} & 85.9 & \underline{74.1} & 81.4 & \underline{80.8} & +1.9 \\
\cmidrule(lr){1-12}
\rowcolor{oursbg}\method{}-S & 0.76 & 69.7 & 84.8 & \underline{78.6} & 89.2 & 81.2 & \underline{86.1} & \underline{74.1} & \underline{81.5} & 80.7 & +1.8 \\
\rowcolor{oursbg}\method{} & 1.52 & \textbf{71.2} & \underline{84.9} & \textbf{79.5} & \underline{89.8} & \textbf{82.9} & \textbf{87.4} & \textbf{76.2} & \textbf{82.9} & \textbf{81.9} & +3.0 \\
\midrule
\multicolumn{12}{c}{\textit{\textbf{Llama-3.1-8B}}}\\
\midrule
LoRA & 0.52 & 72.6 & 87.9 & 80.1 & 94.1 & 85.3 & 90.4 & 79.6 & 85.2 & 84.4 & -- \\
DoRA & 0.54 & 73.1 & 88.2 & 80.5 & 94.5 & 85.9 & 90.7 & 80.4 & 85.8 & 84.9 & +0.5 \\
MoSLoRA & 0.52 & 73.1 & 88.2 & 80.4 & 94.2 & 85.8 & 90.9 & 80.1 & 85.6 & 84.8 & +0.4 \\
CoMoL & 0.53 & 73.6 & 88.5 & 80.9 & 94.6 & 86.4 & 91.2 & 80.9 & 86.3 & 85.3 & +0.9 \\
FlyLoRA & 0.55 & 73.8 & 88.9 & 80.8 & 94.6 & 86.8 & 91.2 & 81.4 & 86.5 & 85.5 & +1.1 \\
\cmidrule(lr){1-12}
HydraLoRA & 1.34 & 73.6 & 88.6 & 80.9 & 94.6 & 86.2 & 91.0 & 80.8 & 86.0 & 85.2 & +0.8 \\
MixLoRA & 1.57 & 73.7 & 88.7 & 81.1 & 94.7 & 86.3 & 91.2 & 81.1 & 86.5 & 85.4 & +1.0 \\
MoLA & 1.38 & 73.5 & 88.4 & 80.7 & 94.8 & 86.4 & 91.2 & 81.4 & 86.0 & 85.3 & +0.9 \\
HMoRA & 2.34 & 74.1 & 89.0 & 81.3 & 95.0 & 86.9 & 91.5 & 81.3 & 86.6 & 85.7 & +1.3 \\
GOAT & 2.21 & 74.0 & \textbf{89.2} & 81.3 & \textbf{95.4} & 87.2 & 91.6 & 81.8 & 86.8 & 85.9 & +1.5 \\
SMoRA & 3.08 & 73.9 & 88.8 & 81.1 & 94.9 & 86.8 & 91.5 & 81.2 & 86.7 & 85.6 & +1.2 \\
LD-MoLE & 2.21 & \underline{74.6} & \underline{89.1} & \underline{81.6} & 95.0 & \underline{87.3} & 91.6 & \underline{81.9} & \underline{86.9} & \underline{86.0} & +1.6 \\
\cmidrule(lr){1-12}
\rowcolor{oursbg}\method{}-S & 0.53 & 74.4 & 89.0 & 81.2 & 95.0 & 86.7 & \underline{91.8} & 81.5 & 86.8 & 85.8 & +1.4 \\
\rowcolor{oursbg}\method{} & 1.06 & \textbf{75.4} & \underline{89.1} & \textbf{82.4} & \underline{95.3} & \textbf{88.5} & \textbf{92.4} & \textbf{83.9} & \textbf{87.9} & \textbf{86.9} & +2.5 \\
\bottomrule
\end{tabular}

\caption{\textbf{Single-domain commonsense reasoning} (accuracy, \%, mean of three seeds; trained on Commonsense170K). \textbf{Param\,\%}: trainable parameters relative to the backbone. $\Delta$: average gain over LoRA. Groups: single-adapter PEFT, mixtures of LoRA experts, ours. \textbf{Bold}/\underline{underline}: best/second best.}
\label{tab:cs}
\end{table*}

\begin{table*}[t]
\centering
\footnotesize
\setlength{\tabcolsep}{3.8pt}
\renewcommand{\arraystretch}{0.93}
\begin{tabular}{l c cccccccc c c}
\toprule
\textbf{Method} & \textbf{Param\,\%} & \textbf{GSM8K} & \textbf{MATH} & \textbf{HumanE.} & \textbf{MBPP} & \textbf{MedMC.} & \textbf{PubMed.} & \textbf{ARC-c} & \textbf{WinoG.} & \textbf{Avg.} & $\bm{\Delta}$ \\
\midrule
\multicolumn{12}{c}{\textit{\textbf{Llama-3.2-3B}}}\\
\midrule
LoRA & 0.76 & 52.7 & 12.6 & 36.0 & 44.8 & 50.2 & 72.4 & 69.5 & 73.8 & 51.5 & -- \\
DoRA & 0.78 & 53.6 & 13.3 & 37.0 & 45.5 & 50.9 & 72.9 & 70.0 & 74.3 & 52.2 & +0.7 \\
MoSLoRA & 0.76 & 53.6 & 13.3 & 36.8 & 45.5 & 50.7 & 72.8 & 70.0 & 74.1 & 52.1 & +0.6 \\
CoMoL & 0.77 & 54.8 & 14.0 & 38.0 & 46.3 & 51.3 & 73.3 & 71.1 & 75.2 & 53.0 & +1.5 \\
FlyLoRA & 0.77 & 55.2 & 14.7 & \underline{39.5} & 46.8 & 51.9 & 73.8 & 71.1 & 75.1 & 53.5 & +2.0 \\
\cmidrule(lr){1-12}
HydraLoRA & 1.92 & 55.0 & 14.1 & 38.0 & 46.6 & 51.6 & 73.9 & 71.2 & 75.3 & 53.2 & +1.7 \\
MixLoRA & 2.13 & 54.8 & 14.4 & 38.6 & 46.8 & 51.9 & 73.4 & 71.1 & 75.4 & 53.3 & +1.8 \\
MoLA & 2.01 & 54.9 & 14.0 & 38.3 & 46.8 & 51.7 & 73.4 & 71.2 & 74.6 & 53.1 & +1.6 \\
HMoRA & 3.40 & 55.6 & 14.7 & 38.6 & 47.4 & 52.2 & 73.6 & 71.5 & 75.2 & 53.6 & +2.1 \\
GOAT & 3.22 & 55.1 & 14.7 & 39.2 & 47.4 & 52.3 & \underline{74.0} & 71.6 & 75.3 & 53.7 & +2.2 \\
SMoRA & 4.52 & 55.4 & 14.5 & 38.7 & 46.9 & 51.6 & 73.5 & 71.5 & 75.1 & 53.4 & +1.9 \\
LD-MoLE & 3.22 & \underline{55.7} & \underline{15.1} & 39.2 & \underline{47.5} & \underline{52.5} & \textbf{74.1} & 71.2 & 75.8 & \underline{53.9} & +2.4 \\
\cmidrule(lr){1-12}
\rowcolor{oursbg}\method{}-S & 0.76 & \underline{55.7} & 14.5 & 39.3 & 47.3 & 52.4 & 73.7 & \underline{71.8} & \underline{76.5} & \underline{53.9} & +2.4 \\
\rowcolor{oursbg}\method{} & 1.52 & \textbf{57.8} & \textbf{17.1} & \textbf{41.5} & \textbf{49.6} & \textbf{53.0} & 73.9 & \textbf{73.7} & \textbf{76.9} & \textbf{55.4} & +3.9 \\
\midrule
\multicolumn{12}{c}{\textit{\textbf{Llama-3.1-8B}}}\\
\midrule
LoRA & 0.52 & 66.4 & 19.8 & 48.2 & 55.9 & 57.3 & 75.6 & 78.1 & 81.7 & 60.4 & -- \\
DoRA & 0.54 & 67.3 & 20.3 & 49.0 & 56.7 & 57.8 & 75.9 & 78.6 & 82.1 & 61.0 & +0.6 \\
MoSLoRA & 0.52 & 67.1 & 20.3 & 48.8 & 56.5 & 57.9 & 76.0 & 78.4 & 82.0 & 60.9 & +0.5 \\
CoMoL & 0.53 & 67.9 & 21.1 & 49.9 & 57.6 & 58.5 & 76.6 & 79.2 & 82.5 & 61.7 & +1.3 \\
FlyLoRA & 0.55 & 68.6 & \underline{21.7} & \underline{51.5} & 58.0 & 59.0 & 76.7 & 79.5 & 82.6 & 62.2 & +1.8 \\
\cmidrule(lr){1-12}
HydraLoRA & 1.34 & 68.2 & 21.2 & 49.8 & 57.7 & 58.7 & 77.0 & 79.6 & 82.8 & 61.9 & +1.5 \\
MixLoRA & 1.57 & 68.5 & 20.9 & 50.3 & 57.7 & 59.0 & 77.0 & 79.8 & 82.6 & 62.0 & +1.6 \\
MoLA & 1.38 & 67.9 & 21.3 & 50.2 & 57.6 & 58.6 & 76.7 & 79.1 & 82.8 & 61.8 & +1.4 \\
HMoRA & 2.34 & 68.7 & \underline{21.7} & 50.9 & 57.9 & 59.0 & 76.9 & 79.9 & 83.1 & 62.3 & +1.9 \\
GOAT & 2.21 & 68.9 & 21.6 & 51.4 & 58.1 & 59.3 & 77.1 & 79.7 & 82.9 & 62.4 & +2.0 \\
SMoRA & 3.08 & 68.7 & 21.6 & 50.7 & 57.6 & 58.8 & 76.8 & \underline{80.0} & 82.5 & 62.1 & +1.7 \\
LD-MoLE & 2.21 & \underline{69.2} & \underline{21.7} & 51.3 & 58.1 & \underline{59.6} & \textbf{77.4} & \underline{80.0} & 83.2 & \underline{62.6} & +2.2 \\
\cmidrule(lr){1-12}
\rowcolor{oursbg}\method{}-S & 0.53 & \underline{69.2} & \underline{21.7} & 51.3 & \underline{58.2} & 59.0 & 77.1 & 79.9 & \underline{83.4} & 62.5 & +2.1 \\
\rowcolor{oursbg}\method{} & 1.06 & \textbf{71.3} & \textbf{23.2} & \textbf{53.0} & \textbf{60.3} & \textbf{60.8} & \underline{77.2} & \textbf{81.4} & \textbf{84.0} & \textbf{63.9} & +3.5 \\
\bottomrule
\end{tabular}

\caption{\textbf{Multi-task adaptation}: one adapter trained on a four-domain mixture (math, code, medicine, commonsense) and evaluated on two benchmarks per domain (accuracy or pass@1, \%, mean of three seeds). Columns as in Table~\ref{tab:cs}.}
\label{tab:mt}
\end{table*}

\section{Experiments}
\label{sec:experiments}

\subsection{Setup}
\label{sec:setup}

\paragraph{Models and tasks.}
We fine-tune Llama-3.2-3B and Llama-3.1-8B~\citep{grattafiori2024llama3} in two settings. In the \emph{single-domain} setting, we train on Commonsense170K~\citep{hu2023llmadapters} and evaluate on BoolQ, PIQA, SIQA, HellaSwag, WinoGrande, ARC-e, ARC-c, and OBQA~\citep{clark2019boolq,bisk2020piqa,sap2019siqa,zellers2019hellaswag,sakaguchi2020winogrande,clark2018arc,mihaylov2018obqa}. In the \emph{multi-task} setting, we train one adapter on 30k examples from each of four domains: MetaMathQA for math~\citep{yu2024metamath}, Magicoder-Evol-Instruct for code~\citep{wei2024magicoder}, MedMCQA for medicine~\citep{pal2022medmcqa}, and Commonsense170K. We evaluate on GSM8K and MATH~\citep{cobbe2021gsm8k,hendrycks2021math}, HumanEval and MBPP~\citep{chen2021codex,austin2021mbpp}, MedMCQA and PubMedQA~\citep{jin2019pubmedqa}, and ARC-c and WinoGrande. Appendix~\ref{app:extra} adds Llama-3.2-1B, Qwen2.5-7B, and Qwen3-8B~\citep{yang2024qwen25,yang2025qwen3}.

\paragraph{Baselines.}
Single-adapter PEFT: LoRA, DoRA, MoSLoRA, CoMoL, and FlyLoRA~\citep{hu2022lora,liu2024dora,wu2024moslora,cao2026comol,zou2025flylora}. Mixtures of LoRA experts: HydraLoRA, MixLoRA, MoLA, HMoRA, GOAT, SMoRA, and LD-MoLE~\citep{tian2024hydralora,li2024mixlora,gao2024mola,liao2025hmora,fan2025goat,zhao2025smora,zhuang2026ldmole}. All adapt the seven projection matrices with their recommended configurations (Appendix~\ref{app:baselines}). The router ablations (\S\ref{sec:ablation}) keep \method{}'s reader--writer structure to isolate the score.

\paragraph{Implementation.}
\method{} uses $N_A{=}4$ readers and $N_B{=}8$ writers of rank $r{=}4$, $r_e{=}8$, $k{=}4$, $\alpha{=}8$, and $\lambda_{\mathrm{bal}}{=}10^{-2}$; \method{}-S ($N_A{=}2$, $N_B{=}4$, $r_e{=}4$, $k{=}2$) matches LoRA's budget. All methods use AdamW~\citep{loshchilov2019adamw} (learning rate $2{\times}10^{-4}$, cosine decay, batch 128) for 3 epochs (single-domain) or 2 (multi-task) in bfloat16 on 8 H100 GPUs. We report means over three seeds (variances in Appendix~\ref{app:extra}).

\subsection{Main Results}
\label{sec:main}

\paragraph{Single-domain.}
\method{} attains the best average on both backbones (Table~\ref{tab:cs}), \csVaneThree{} on Llama-3.2-3B and \csVaneEight{} on Llama-3.1-8B: $+\csMarginThree$ and $+\csMarginEight$ over the strongest baseline (\csBestNameEight{}) and $+\csGainLoRAThree$ and $+\csGainLoRAEight$ over LoRA, with less than half the trainable parameters of the 8-expert mixtures (\pVane\% vs.\ \pLDMoLE\% on 8B). At LoRA's budget, \method{}-S is within $0.2$ of the best mixture. Gains are largest on ARC-c ($+2.0$ over the best baseline on 8B) and WinoGrande ($+1.2$); on the near-saturated HellaSwag and PIQA, the best mixture leads by $0.1$.

\paragraph{Multi-task.}
Routing matters more when domains compete for the same low-rank parameters~\citep{zou2025flylora}. On the four-domain mixture (Table~\ref{tab:mt}), \method{} improves on the best baseline by $+\mtMarginThree$ (3B) and $+\mtMarginEight$ (8B), and on LoRA by $+\mtGainLoRAThree$ and $+\mtGainLoRAEight$. Per domain on 8B (Table~\ref{tab:domains}), the gains over LD-MoLE are $+\domGainMath$ (math), $+\domGainCode$ (code), $+\domGainCS$ (commonsense), and $+\domGainMed$ (medicine, where PubMedQA is the one column a baseline leads). \method{}-S ties the best mixture on 3B at a quarter of its parameters. Gains carry over to Qwen2.5-7B and Qwen3-8B ($+\mtMarginQwenTwo$ and $+\mtMarginQwenThree$; Appendix~\ref{app:extra}).

\begin{table}[t]
\centering
\footnotesize
\setlength{\tabcolsep}{2.6pt}
\begin{tabular}{l c c c c}
\toprule
\textbf{Configuration} & \textbf{CS} & \textbf{MT} & $\bm{\Delta}_{\text{MT}}$ & $\bm{\rho}$ \\
\midrule
\rowcolor{oursbg}\method{} (full) & \textbf{86.9} & \textbf{63.9} & -- & \textbf{0.61} \\
\midrule
\multicolumn{5}{l}{\textit{Router score}}\\
\quad Token router over pairs & 86.0 & 62.5 & $-1.4$ & 0.22 \\
\quad Product router & 85.6 & 61.9 & $-2.0$ & 0.12 \\
\quad Latent-linear router & 86.1 & 62.6 & $-1.3$ & 0.24 \\
\quad Reader-norm self-routing & 85.7 & 62.0 & $-1.9$ & 0.06 \\
\quad No calibration & 86.5 & 63.3 & $-0.6$ & 0.55 \\
\quad Untied writer queries & 86.6 & 63.4 & $-0.5$ & 0.49 \\
\midrule
\multicolumn{5}{l}{\textit{Router training}}\\
\quad No dense gradient & 86.3 & 63.0 & $-0.9$ & 0.43 \\
\quad Score gradient to experts & 86.7 & 63.6 & $-0.3$ & 0.58 \\
\quad Pair-level balancing & 86.6 & 63.4 & $-0.5$ & 0.59 \\
\quad No balancing & 86.2 & 62.8 & $-1.1$ & 0.52 \\
\midrule
\multicolumn{5}{l}{\textit{Structure and aggregation}}\\
\quad Paired experts & 86.2 & 62.9 & $-1.0$ & 0.57 \\
\quad Shared reader ($N_A{=}1$) & 86.4 & 63.1 & $-0.8$ & 0.60 \\
\quad Softmax gates & 86.5 & 63.2 & $-0.7$ & 0.61 \\
\quad Separable gates & 85.8 & 62.2 & $-1.7$ & -- \\
\quad Compass added to output & 86.8 & 63.7 & $-0.2$ & 0.60 \\
\bottomrule
\end{tabular}

\caption{\textbf{Ablations} on Llama-3.1-8B. CS/MT: single-domain and multi-task averages. $\rho$: Spearman correlation between router scores and the exact first-order usefulness of all 32 pairs on held-out multi-task tokens.}
\label{tab:ablation}
\end{table}

\subsection{Ablations}
\label{sec:ablation}

Table~\ref{tab:ablation} changes one component at a time.

\paragraph{The score matters most.}
Keeping the reader--writer structure but routing with a linear router over the 32 pairs costs \ablTokenRouter{} multi-task points, back to the level of LD-MoLE. A latent-linear router loses \ablLatent{}, reader-norm self-routing \ablNorm{}, and a product router \ablProduct{}. All four are first-order or writer-blind (Theorem~\ref{thm:router}), and their scores correlate with usefulness at $\rho\le0.24$, against \rhoVane{}. Removing calibration costs \ablRaw{}, and untying the writer queries \ablUntied{}.

\paragraph{Training and structure.}
The dense first-order gradient is worth \ablNoDense{} points and raises $\rho$ from \rhoNoDense{} to \rhoVane{}; letting score gradients reach the experts costs \ablIntoExperts{}. Pair-level balancing fights specialization ($-$\ablPairBal{}), and no balancing costs \ablNoBal{}. Paired experts with the same score lose \ablPaired{} and a single shared reader \ablShared{}, so structure and score are complementary. Separable gates lose \ablSeparable{} as their rank collapses (Proposition~\ref{prop:rank}), and softmax gates \ablSoftmax{}. Adding the compass to the output does not help ($-$\ablCompassOut{}): its job is to point, not to write.

\begin{figure*}[t]
\centering
\includegraphics[width=\textwidth]{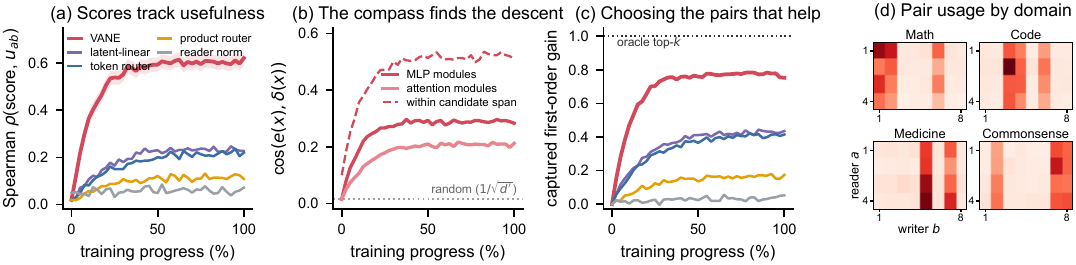}
\caption{\textbf{How \method{} routes} (Llama-3.1-8B, multi-task, held-out tokens). (a) Spearman correlation between router scores and the exact first-order usefulness of all 32 pairs, measured in the backward pass (\method{}: $\pm1$ s.d.\ over seeds). (b) Cosine between the compass and the true descent direction, overall and after projection onto the span of the candidate updates. (c) Fraction of the oracle top-$k$ first-order gain captured by the selected pairs. (d) Pair usage (readers $\times$ writers) per domain, averaged over layers.}
\label{fig:mechanism}
\end{figure*}

\begin{figure*}[t]
\centering
\includegraphics[width=\textwidth]{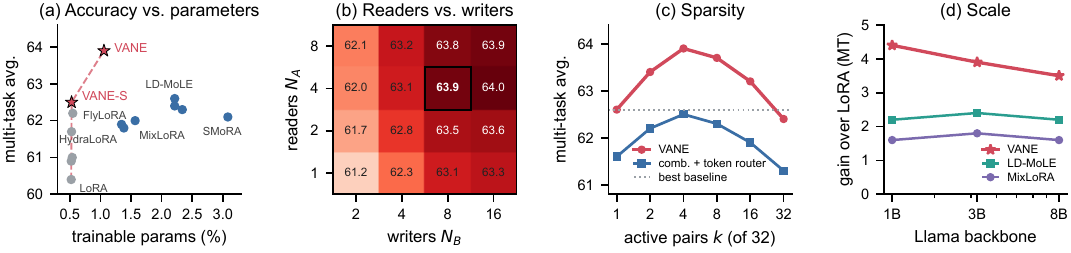}
\caption{\textbf{Efficiency, structure, sparsity, and scale} (multi-task average). (a) Accuracy vs.\ trainable parameters on Llama-3.1-8B (gray: single adapters; blue: mixtures). (b) Readers $\times$ writers at $r{=}4$ (box: default). (c) Number of active pairs $k$ out of 32 (dotted: best baseline). (d) Gain over LoRA across Llama scales.}
\label{fig:results}
\end{figure*}

\subsection{Mechanism Analysis}
\label{sec:mechanism}

The factorization makes the exact usefulness of every pair observable during training with one backward hook and Eq.~\eqref{eq:useful}, so we can test the premise of the method directly.

\paragraph{Scores track usefulness.}
The Spearman correlation between \method{}'s scores and the measured usefulness reaches $0.5$ within the first fifth of training and plateaus at \rhoVane{} (Figure~\ref{fig:mechanism}a), while the ablation routers stay at \rhoLat{} (latent-linear), \rhoTok{} (token), \rhoProd{} (product), and \rhoNorm{} (reader norm), the ordering predicted by the controlled model (Figure~\ref{fig:motivation}a). The selected pairs capture \gainVane{} of the oracle first-order gain, against \gainTok{} for the token router and \gainNorm{} for reader norms (Figure~\ref{fig:mechanism}c).

\paragraph{The compass learns the descent direction.}
The compass learns only from the routing gradient (Proposition~\ref{prop:dense}), yet its cosine with the true descent direction reaches \cosMLP{} in MLP and \cosAttn{} in attention modules, against $1/\sqrt{\dout}\le0.03$ for a random direction, and $0.52$ within the span of the candidate updates, the only part that matters for scoring (Figure~\ref{fig:mechanism}b). Alignment peaks in the middle layers (Figure~\ref{fig:layers}).

\paragraph{Writers specialize, readers are shared.}
Each domain selects a few writers but uses all readers (Figure~\ref{fig:mechanism}d): the mutual information between domain and writer is \miWriter{} bits, against \miReader{} bits for readers. \method{} thus rediscovers, unprompted, the asymmetry of LoRA factors~\citep{zhu2024asymmetry} that shared or frozen readers hard-code~\citep{tian2024hydralora,zou2025flylora}, and adding writers helps more than adding readers (Figure~\ref{fig:results}b).

\subsection{Efficiency, Sparsity, and Scale}
\label{sec:efficiency}

\method{}-S and \method{} lie on the accuracy--parameter frontier (Figure~\ref{fig:results}a). With 4 readers, going from 2 to 8 writers adds 1.9 points, while with 8 writers, going from 1 to 4 readers adds 0.8 (Figure~\ref{fig:results}b). Sparsity helps: $k{=}4$ of 32 pairs is best, a dense core loses 1.5 points, and \method{} beats the token router at every $k$ (Figure~\ref{fig:results}c). Gains over LoRA grow as models shrink, from $+\mtGainLoRAEight$ on 8B to $+\scaleGainOneB$ on Llama-3.2-1B, at $1.6$--$2.0\times$ those of LD-MoLE (Figure~\ref{fig:results}d). Training takes \timeVaneOverLoRA$\times$ LoRA's time and \timeVaneVsLDMoLE\% less than LD-MoLE, and unmerged decoding takes \latVane{} vs.\ \latLDMoLE{} ms per token, because \method{} runs dense products instead of the per-expert kernels that dominate dynamic-adapter latency~\citep{kong2024loraswitch,li2026adafuse} (Table~\ref{tab:cost}).

\section{Conclusion}
\label{sec:conclusion}

A router should estimate how much each expert's update would lower the loss: to first order, the alignment between the update and the descent direction at the layer output. This quantity is invisible to token routers for centered tokens and to reader-side scores always, but once experts are split into readers and writers it can be computed exactly for every pair in the low-rank latent space. \method{} turns this into a router made of the experts themselves plus a small compass, which scores every pair, learns from every pair's exact first-order effect, and does not depend on how experts are parameterized. It outperforms twelve PEFT and MoE-LoRA baselines with half the parameters of an 8-expert mixture. The same principle applies to any mixture whose experts factor through a shared low-dimensional space, including sparse MoE pretraining and adapter merging.

\section*{Limitations}

Theorem~\ref{thm:router} rests on a linear-response model with Gaussian tokens; it describes what router classes can represent, not how they train. Our evidence that the analysis transfers is the score--usefulness correlation measured on real runs. Usefulness is a first-order, per-layer quantity. It ignores interactions between layers and between pairs, and it is myopic when the loss is far from locally linear. The dense router gradient uses the first-order surrogate for unselected pairs and the true gradient only for selected ones. Calibrating scores by update norm gives up some static optimality (Appendix~\ref{app:toy}): when token means dominate, a well-trained token router captures a comparable share of the first-order gain. \method{} computes every reader and every writer for each token. This is cheap at our sizes but grows linearly with $N_A$ and $N_B$, and large pools would need a sparse writer path. Our experiments cover models up to 8B parameters and four domains. Larger backbones, more domains, and generation-heavy tasks remain to be tested, as do adapter merging and continual learning, where interference between tasks is the main obstacle~\citep{zou2025flylora,zou2026flycl}.

\section*{Ethics Statement}

\method{} is a general fine-tuning method and inherits the risks of the models and data it adapts, including biases in instruction-tuning corpora and the lower cost of producing specialized models, which also lowers the cost of misuse. All datasets are public and used under their licenses. The medical benchmarks measure multiple-choice accuracy, and a model adapted with \method{} is not a medical device. No human subjects or personal data are involved. AI assistants were used for language editing and for checking LaTeX; the method, proofs, and experimental design are the authors' own, and every reference was checked against its primary record.

\bibliography{custom}

\appendix

\section{Extended Related Work}
\label{app:related}

\paragraph{Parameter-efficient fine-tuning.}
Adapters insert small bottleneck modules into a frozen network~\citep{houlsby2019parameter}. LoRA instead adds a low-rank product to each weight matrix and can be merged after training~\citep{hu2022lora}. LLM-Adapters standardized the commonsense and arithmetic protocols we build on~\citep{hu2023llmadapters}. Later variants change the parameterization or the initialization. DoRA separates magnitude from direction~\citep{liu2024dora}. PiSSA initializes from the principal singular subspace of the pretrained weight~\citep{meng2024pissa}, and LoRA-GA from the gradient at initialization~\citep{wang2024loraga}. VB-LoRA composes all factors from a shared vector bank~\citep{li2024vblora}, and MoS routes shards from global pools into each factor~\citep{wang2025mos}. Two findings about LoRA's factors guide our design. First, $A$ extracts features and $B$ produces the output, and training $B$ matters more than training $A$~\citep{zhu2024asymmetry}; $A$ can even stay a frozen sparse random projection~\citep{zou2025flylora}. Second, the $A$ matrices of LoRAs trained on different data are similar while their $B$ matrices differ~\citep{tian2024hydralora}. \method{} does not assume this asymmetry. It lets the router discover it, and the learned usage shows writers carrying most of the domain information (\S\ref{sec:mechanism}).

\paragraph{Mixtures of LoRA experts.}
Sparse mixtures of experts scale capacity by routing each token to a few feed-forward experts~\citep{shazeer2017outrageously,lepikhin2021gshard,fedus2022switch,zhou2022expertchoice,dai2024deepseekmoe,jiang2024mixtral}. Their soft and fully differentiable variants avoid discrete top-$k$ selection~\citep{puigcerver2024softmoe,wang2025remoe}. Adapter mixtures bring this idea to fine-tuning.
\begin{itemize}[leftmargin=*,itemsep=0pt,topsep=1pt]
\item Paired experts: LoRAMoE uses them to protect world knowledge~\citep{dou2024loramoe}, and MixLoRA, MoLE, and AdaMoLE route among them per token~\citep{li2024mixlora,wu2024mole,liu2024adamole}.
\item Allocation and optimization: MoLA and HMoRA allocate experts by depth or hierarchically~\citep{gao2024mola,liao2025hmora}, HILO also varies their rank~\citep{cong2025hilo}, and GOAT aligns the optimization of an SVD-structured mixture with full fine-tuning~\citep{fan2025goat}.
\item Adaptive compute: DynMoLE and LD-MoLE adapt the number of active experts~\citep{li2025dynmole,zhuang2026ldmole}. CARE admits experts until a confidence mass is reached~\citep{saliencro2026care}, and VI-MoLE allocates experts by certified value of information~\citep{saliencro2026vimole}.
\item Richer routing and heterogeneous experts: TalkLoRA lets experts exchange information before routing~\citep{mu2026talklora}, and SAMoRA aligns routing with input semantics~\citep{shi2026samora}. MoA mixes heterogeneous adapter types~\citep{cao2026moa}, FourierMoE and FRAME move experts to spectral or fractional domains~\citep{jiang2026fouriermoe,saliencro2026frame}, and adaptive shared experts combine task-specific and shared experts~\citep{yang2025adaptive}.
\item Other structures: TT-LoRA MoE trains tensorized experts separately and routes among them~\citep{kunwar2025ttloramoe}, MoEGen generates instance-specific updates from expert codes~\citep{zeng2026moegen}, and Localized LoRA-MoE routes over blocks of the weight matrix~\citep{barazandeh2026localized}.
\item Pruning and MoE backbones: DMEP prunes experts per module once routing stabilizes~\citep{li2026dmep}, EPnG prunes and grows expert ranks under a fixed budget~\citep{lee2026epng}, and MoE$^2$-LoRA and NSFT adapt MoE language models with routed or sub-expert adapters~\citep{yang2026moe2lora,tan2026nsft}.
\end{itemize}
Several works question routing itself. CD-LoRA and Align-LoRA remove routers and align representations in a shared space~\citep{zha2026cdlora,liu2025alignlora}, and \citet{chowdhury2026routing} show that disjoint routing does not prevent interference inside an adapter. All of these methods leave open what quantity a router should estimate. \method{} answers this question and derives its router from the answer.

\paragraph{Shared factors, combinatorial experts, and tensor views.}
Sharing factors across experts saves parameters and couples experts.
\begin{itemize}[leftmargin=*,itemsep=0pt,topsep=1pt]
\item Shared or mixed subspaces: HydraLoRA shares a single down-projection across heads~\citep{tian2024hydralora}, MALoRA shares a low-rank subspace~\citep{wang2025malora}, and MoSLoRA learns a mixer between the two LoRA subspaces~\citep{wu2024moslora}.
\item Core experts: CoMoL stores experts as core matrices between shared factors and routes in the core space~\citep{cao2026comol}. Self-Routed Tensor Adapters blend slices of a shared Tucker core with weights computed from the low-rank projection~\citep{yadav2026srta}.
\item Many experts from few parts: T-REX mixes and matches rank-one experts so that the expert subspace grows quadratically with linear parameter cost~\citep{zhang2024trex}. SMoRA treats every rank as an expert~\citep{zhao2025smora}, S'MoRE assembles residual experts along trees~\citep{zeng2025smore}, and Dynamic Subspace Composition builds updates from a sparse set of basis vectors~\citep{khasia2025dsc}.
\item Factorized expert tensors: product keys retrieve from a Cartesian product of sub-keys~\citep{lample2019product}, and PEER uses them to route among a million tiny experts~\citep{he2024peer}. $\mu$MoE computes a mixture over a factorized expert tensor~\citep{oldfield2024mumoe}.
\end{itemize}
\method{}'s reader--writer product is the simplest combinatorial structure in this family. Its distinctive use is computational: because every pair factors through the same rank-$r$ latent, the router can evaluate every pair's first-order effect in that latent space.

\paragraph{Routing signals.}
Token routers map the hidden state to logits, possibly with priors such as semantic clusters~\citep{zhang2024trex,shi2026samora} or the routing history of earlier layers~\citep{qiu2026hero}. Self-routing methods remove the router.
\begin{itemize}[leftmargin=*,itemsep=0pt,topsep=1pt]
\item AoE ranks experts by the norms of their own low-rank activations~\citep{lv2025aoe}.
\item FlyLoRA uses the top-$k$ coordinates of a frozen sparse projection as an implicit router~\citep{zou2025flylora}. Its decorrelation properties also help continual learning~\citep{zou2026flycl,zou2025structural}, and NeuroLoRA adds a context gate before selection~\citep{yang2026neurolora}.
\item FURINA activates experts by the angle between the input and each adapter's direction~\citep{han2025furina}. Self-Routing uses a subspace of the hidden state as logits~\citep{mohamud2026selfrouting}, and geometric routing uses cosine similarity in a low-dimensional metric space~\citep{ternovtsii2026geometric}.
\item PASs-MoE calibrates routing with each expert's pathway activations to prevent router--expert co-drift~\citep{hou2026passmoe}.
\end{itemize}
In the terms of Theorem~\ref{thm:router}, every one of these scores is either affine in the token or a function of the input-side activations alone. None involves the writer or the loss. On the training side, Default MoE gives the router a dense gradient by substituting moving averages of expert outputs for unselected experts~\citep{panda2025dense}. GRIN estimates gradients through discrete routing~\citep{liu2024grin}, and DAG-MoE learns how selected experts are aggregated~\citep{feng2026dagmoe}. \method{}'s dense gradient differs from Default MoE's in being exact to first order, which the factorization makes affordable, and it is the gradient of the quantity the router is designed to estimate.

\paragraph{Value of information and adaptive compute.}
VI-MoLE also routes by an estimate of risk reduction~\citep{saliencro2026vimole}. It learns the residual risk of each prefix of router-ordered experts, certifies it on held-out data, and decides how much compute to spend and when to abstain. \method{} answers a complementary question, \emph{which} pairs to activate at a fixed $k$. It does so with a first-order estimate that is exact during training and forecast at test time. The two could be combined: \method{}'s scores could order the candidates whose value VI-MoLE certifies.

\section{A Unified View of Low-Rank Mixtures}
\label{app:unify}

Stack the readers into $\tilde A\in\R^{N_Ar\times d}$ and the writers into $\tilde B\in\R^{\dout\times N_Br}$. Every layer of the form of Eq.~\eqref{eq:layer} computes
\begin{equation}
\Delta W(x)=\tilde B\,\big(G(x)\otimes I_r\big)\,\tilde A,
\label{eq:unified}
\end{equation}
where $G(x)\in\R^{N_B\times N_A}$ is the writer-by-reader matrix of gates. Table~\ref{tab:unify} places existing designs in this template. Paired MoE-LoRA uses a diagonal $G$ and a token router. HydraLoRA uses a single column. MoSLoRA replaces $G\otimes I_r$ by a fixed dense mixer, and CoMoL by a routed mixture of dense cores. Rank-one mixtures set $r=1$. \method{} uses a sparse $G$ over all $N_AN_B$ pairs, chosen by update scores. The template also shows why the combinatorial structure is cheap to score. The usefulness of all pairs, $G$-weighted or not, is the matrix $Z^{\top}W$ of Proposition~\ref{prop:latent}, whose size is set by the number of factors rather than the number of pairs.

\begin{table}[h]
\centering
\footnotesize
\setlength{\tabcolsep}{2.4pt}
\begin{tabular}{l c c c}
\toprule
\textbf{Method} & $G(x)$ & $r$ & \textbf{Router score} \\
\midrule
LoRA & $[1]$ & $r$ & -- \\
Paired MoE-LoRA & diagonal & $r$ & token \\
HydraLoRA & one column & $r$ & token \\
MoSLoRA & fixed dense mixer & $r$ & -- \\
CoMoL & routed dense core & $r$ & latent-linear \\
SMoRA, FlyLoRA & diagonal & 1 & token, reader \\
T-REX & sparse & 1 & token \\
AoE (as adapter) & diagonal & $r$ & reader norm \\
\rowcolor{oursbg}\method{} & sparse, all pairs & $r$ & update \\
\bottomrule
\end{tabular}
\caption{Low-rank mixtures as instances of Eq.~\eqref{eq:unified}. ``Reader'' scores depend on the token only through the input-side activations; ``update'' scores depend on the writer and on a forecast of the loss gradient.}
\label{tab:unify}
\end{table}

\section{Proofs}
\label{app:proofs}

Throughout, $x\in\R^d$ is a token, $z_a=A_ax$, $v_{ab}=B_bz_a$, $\delta=-\nabla_y\ell$, and $u_{ab}=\ip{\delta}{v_{ab}}$. We write $\sigma'_{ab}=\sigma'(s_{ab})$.

\subsection{Proof of Proposition~\ref{prop:latent}}
Let $\ell$ be differentiable at $y$. Then $\ell(y+\eta v_{ab})=\ell(y)+\eta\ip{\nabla_y\ell}{v_{ab}}+o(\eta)=\ell(y)-\eta u_{ab}+o(\eta)$, and $O(\eta^2)$ if $\ell$ is twice differentiable. By associativity, $u_{ab}=\delta^{\top}B_bA_ax=(B_b^{\top}\delta)^{\top}(A_ax)=w_b^{\top}z_a$ with $w_b=B_b^{\top}\delta$. Collecting $z_a$ and $w_b$ as columns of $Z\in\R^{r\times N_A}$ and $W\in\R^{r\times N_B}$ gives $[u_{ab}]=Z^{\top}W$. Similarly $\|v_{ab}\|^2=z_a^{\top}B_b^{\top}B_bz_a=z_a^{\top}G_bz_a$. For the cost, $Z$ takes $N_Ard$ multiply--adds and $W$ takes $N_Br\dout$. The product $Z^{\top}W$ takes $N_AN_Br$. The Gram matrices take $N_Br^2\dout$ once per forward pass, which is shared by all tokens in the batch, and the norms take $N_AN_Br^2$ per token. Materializing all updates requires $N_AN_Br\dout$ multiply--adds after $Z$, plus $N_AN_B\dout$ for the inner products with $\delta$. \qed

\subsection{Proof of Theorem~\ref{thm:router}}
We first record the moments of Gaussian quadratic forms that the proof needs.

\begin{lemma}
\label{lem:gauss}
Let $\varepsilon\sim\mathcal{N}(0,\Sigma)$, $S$ symmetric, and $m\in\R^d$. Then $\E[\varepsilon^{\top}S\varepsilon]=\mathrm{tr}(S\Sigma)$, $\mathrm{Var}(\varepsilon^{\top}S\varepsilon)=2\,\mathrm{tr}\big((S\Sigma)^2\big)$, and $\E\big[\varepsilon\,(\varepsilon^{\top}S\varepsilon)\big]=0$, so in particular $\mathrm{Cov}(m^{\top}\varepsilon,\varepsilon^{\top}S\varepsilon)=0$.
\end{lemma}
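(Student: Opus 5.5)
The plan is to prove the three claims of Lemma~\ref{lem:gauss} separately, first reducing to the isotropic case by whitening. Write $\varepsilon=\Sigma^{1/2}g$ with $g\sim\mathcal{N}(0,I_d)$. Then $\varepsilon^{\top}S\varepsilon=g^{\top}Tg$ with $T=\Sigma^{1/2}S\Sigma^{1/2}$ symmetric. Since $\mathrm{tr}(T)=\mathrm{tr}(S\Sigma)$ and $\mathrm{tr}(T^2)=\mathrm{tr}((S\Sigma)^2)$ by cyclicity of the trace, it suffices to prove $\E[g^{\top}Tg]=\mathrm{tr}\,T$ and $\mathrm{Var}(g^{\top}Tg)=2\,\mathrm{tr}(T^2)$. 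This whitening also covers a singular $\Sigma$, so no invertibility is needed.

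For the first two moments I will diagonalize $T=Q\Lambda Q^{\top}$ with $Q$ orthogonal. Rotation invariance of the standard Gaussian gives $h=Q^{\top}g\sim\mathcal{N}(0,I)$, so $g^{\top}Tg=\sum_i\lambda_ih_i^2$, a weighted sum of independent $\chi^2_1$ variables. Since $\E[h_i^2]=1$, the mean is $\sum_i\lambda_i=\mathrm{tr}\,T$. Since $\mathrm{Var}(h_i^2)=\E h_i^4-1=3-1=2$ and the $h_i$ are independent, the variance is $2\sum_i\lambda_i^2=2\,\mathrm{tr}(T^2)$.

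For the third claim I will use symmetry rather than computation. The map $\varepsilon\mapsto-\varepsilon$ preserves the law $\mathcal{N}(0,\Sigma)$ and leaves the quadratic form $\varepsilon^{\top}S\varepsilon$ unchanged. It therefore sends $\varepsilon\,(\varepsilon^{\top}S\varepsilon)$ to its negative, so this variable equals its own negative in distribution. Integrability holds because Gaussian vectors have all moments, so the expectation exists and must be $0$. For the covariance, $m^{\top}\varepsilon$ has mean zero, so $\mathrm{Cov}(m^{\top}\varepsilon,\varepsilon^{\top}S\varepsilon)=\E[m^{\top}\varepsilon\,\varepsilon^{\top}S\varepsilon]=m^{\top}\E[\varepsilon\,\varepsilon^{\top}S\varepsilon]=0$.

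There is no real obstacle here. The only points needing care are to justify the trace identities after whitening via cyclicity, and to note that integrability makes the odd-symmetry argument legitimate. The fourth moment $\E h^4=3$ is the sole numeric fact used.
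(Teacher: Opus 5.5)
Your proposal is correct and follows essentially the same route as the paper: whiten via $\varepsilon=\Sigma^{1/2}g$, diagonalize $\Sigma^{1/2}S\Sigma^{1/2}$ to obtain a weighted sum of independent $\chi^2_1$ variables for the mean and variance, and use the odd symmetry of $\varepsilon(\varepsilon^{\top}S\varepsilon)$ under $\varepsilon\mapsto-\varepsilon$ for the third claim. Your remarks on trace cyclicity, integrability, singular $\Sigma$, and $\E h^4=3$ spell out details the paper leaves implicit.
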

\begin{proof}
Write $\varepsilon=\Sigma^{1/2}g$ with $g\sim\mathcal{N}(0,I)$ and diagonalize $\Sigma^{1/2}S\Sigma^{1/2}=Q\Lambda Q^{\top}$. Then $\varepsilon^{\top}S\varepsilon=\sum_i\lambda_i\tilde g_i^2$ with $\tilde g=Q^{\top}g\sim\mathcal{N}(0,I)$. Its mean is $\sum_i\lambda_i=\mathrm{tr}(S\Sigma)$ and its variance is $2\sum_i\lambda_i^2=2\,\mathrm{tr}((S\Sigma)^2)$. The last claim holds because $\varepsilon(\varepsilon^{\top}S\varepsilon)$ is an odd function of $\varepsilon$ and the distribution of $\varepsilon$ is symmetric.
\end{proof}

\paragraph{Expected usefulness.}
Since $\E[\xi\mid x]=0$, $\bar u_{ab}(x)=\E[(\Gamma x+\xi)^{\top}B_bA_ax\mid x]=x^{\top}\Gamma^{\top}B_bA_ax=x^{\top}M_{ab}x=x^{\top}Sx$, because a quadratic form depends only on the symmetric part of its matrix. Writing $x=\mu+\varepsilon$,
\begin{equation}
\bar u_{ab}=\mu^{\top}S\mu+2\mu^{\top}S\varepsilon+\varepsilon^{\top}S\varepsilon .
\label{eq:decomp}
\end{equation}
By Lemma~\ref{lem:gauss} the last two terms are uncorrelated, so $\mathrm{Var}(\bar u_{ab})=4\mu^{\top}S\Sigma S\mu+2\,\mathrm{tr}((S\Sigma)^2)$. The first term is the \emph{mean-interaction} part of usefulness and the second is the \emph{token-specific} part.

\paragraph{Part (i).}
Let $s(x)=w^{\top}x+c=w^{\top}\varepsilon+\mathrm{const}$. By Eq.~\eqref{eq:decomp} and Lemma~\ref{lem:gauss}, $\mathrm{Cov}(s,\bar u_{ab})=\mathrm{Cov}(w^{\top}\varepsilon,2\mu^{\top}S\varepsilon)=2w^{\top}\Sigma S\mu$ and $\mathrm{Var}(s)=w^{\top}\Sigma w$. By the Cauchy--Schwarz inequality in the inner product $\ip{p}{q}_\Sigma=p^{\top}\Sigma q$,
\begin{equation*}
\mathrm{corr}^2(s,\bar u_{ab})=\frac{4(w^{\top}\Sigma S\mu)^2}{w^{\top}\Sigma w\,\mathrm{Var}(\bar u_{ab})}\le\frac{4\,\mu^{\top}S\Sigma S\mu}{\mathrm{Var}(\bar u_{ab})},
\end{equation*}
with equality iff $w\propto S\mu$. The least-squares router is $w^\star=\Sigma^{-1}\mathrm{Cov}(x,\bar u_{ab})=\Sigma^{-1}(2\Sigma S\mu)=2S\mu$, which attains the bound. A linear router over pairs uses an arbitrary $w$ per pair. A product router uses $w=w_a+w'_b$, and a latent-linear router uses $w=A_a^{\top}c_{ab}$. All three are affine in $x$ and obey the bound. When $\mu=0$ the numerator is zero.

\paragraph{A closed form for isotropic tokens.}
Let $\Sigma=\sigma^2I$ and let the direction of $\mu$ be uniform on the sphere, independent of $S$. Then $\E_\mu[\mu^{\top}S^2\mu]=\|\mu\|^2\mathrm{tr}(S^2)/d$. With $\kappa=\|\mu\|^2/\mathrm{tr}\,\Sigma=\|\mu\|^2/(d\sigma^2)$, the ratio of expected mean-interaction to expected total variance is
\begin{equation*}
\frac{4\sigma^2\|\mu\|^2\mathrm{tr}(S^2)/d}{4\sigma^2\|\mu\|^2\mathrm{tr}(S^2)/d+2\sigma^4\mathrm{tr}(S^2)}=\frac{2\kappa}{1+2\kappa}.
\end{equation*}
Even when the token mean carries as much energy as the fluctuations ($\kappa=1$), token routers can explain at most about two thirds of the expected usefulness variance.

\paragraph{Part (ii).}
If $s_{ab}=\varphi(A_ax)$ for some function $\varphi$, then $s_{ab}=s_{ab'}$ for all writers $b,b'$, so the score cannot order the writers of a reader. Any tie-breaking rule that does not look at the writers chooses among them independently of $\{u_{ab}\}_b$. Conditional on the chosen reader, the expected usefulness of the chosen writer is therefore the average over writers, which is the gain of a uniformly random writer. For paired experts, where each reader has one writer, the same argument shows that reader-side scores rank experts independently of what their writers do.

\paragraph{Part (iii).}
If $UV^{\top}=\Gamma$, then $\ip{e(x)}{v_{ab}(x)}=x^{\top}V U^{\top}B_bA_ax=x^{\top}\Gamma^{\top}B_bA_ax=\bar u_{ab}(x)$. A factorization $\Gamma=UV^{\top}$ with inner dimension $r_e$ exists iff $r_e\ge\mathrm{rank}\,\Gamma$. For optimality, take any rule that selects $k$ pairs $\mathcal{S}(x)$ from $x$. By the tower rule, $\E[\sum_{\mathcal{S}(x)}u_{ab}]=\E[\sum_{\mathcal{S}(x)}\bar u_{ab}(x)]$. Pointwise in $x$, the sum is maximized by the $k$ largest $\bar u_{ab}(x)$, which are the top-$k$ pairs under the score. \qed

\begin{remark}
The noise $\xi$ plays no role in Theorem~\ref{thm:router}. It adds variance to the realized usefulness that no router can predict from $x$, so it lowers every router's correlation with realized usefulness by the same factor. When $r_e<\mathrm{rank}\,\Gamma$, the best rank-$r_e$ compass in the least-squares sense is the reduced-rank regression of $\delta$ on $x$. Figure~\ref{fig:toy-extras}a shows the correlation saturating once $r_e$ reaches $\mathrm{rank}\,\Gamma=6$.
\end{remark}

\subsection{Proof of Proposition~\ref{prop:dense}}
\paragraph{Forward pass.}
The straight-through core uses $\tilde g_{ab}=m_{ab}g_{ab}+(1-m_{ab})(g_{ab}-\sg(g_{ab}))$. Its value is $m_{ab}g_{ab}$ because $g-\sg(g)$ evaluates to zero, so $c_b$ and $y$ coincide with Eq.~\eqref{eq:core}.

\paragraph{Router gradient.}
Since $\sg$ has zero derivative, $\partial\tilde g_{ab}/\partial g_{ab}=m_{ab}+(1-m_{ab})=1$ for every pair. From $y=W_0x+\tfrac{\alpha}{r}\sum_bB_bc_b$ we get $\partial\ell/\partial c_b=\tfrac{\alpha}{r}B_b^{\top}\nabla_y\ell=-\tfrac{\alpha}{r}B_b^{\top}\delta$, and $\partial c_b/\partial\tilde g_{ab}=z_a$. Hence $\partial\ell/\partial g_{ab}=-\tfrac{\alpha}{r}\ip{B_b^{\top}\delta}{z_a}=-\tfrac{\alpha}{r}u_{ab}$ and $\partial\ell/\partial s_{ab}=-\tfrac{\alpha}{r}\sigma'_{ab}u_{ab}$. For an unselected pair, the coefficient of $z_a$ in $c_b$ has value zero, so no gradient reaches $z_a$ or $B_b$ through that term. Unselected pairs therefore train only the router.

\paragraph{The compass step.}
With stop-gradients on $z_a$ and $B_b$, the score in Eq.~\eqref{eq:score} is $s_{ab}=\ip{e}{v_{ab}}/(\|v_{ab}\|+\epsilon)$ with $e=Uh$. Thus $\partial s_{ab}/\partial e=v_{ab}/(\|v_{ab}\|+\epsilon)$ and
\begin{align*}
\nabla_e\ell&=\sum_{ab}\frac{\partial\ell}{\partial s_{ab}}\frac{v_{ab}}{\|v_{ab}\|+\epsilon}\\
&=-\frac{\alpha}{r}\sum_{ab}\sigma'_{ab}\frac{v_{ab}v_{ab}^{\top}}{\|v_{ab}\|+\epsilon}\,\delta=-\frac{\alpha}{r}P\delta,
\end{align*}
using $u_{ab}=v_{ab}^{\top}\delta$. Each summand of $P$ is a nonnegative multiple of a rank-one projector, so $P\succeq0$. For a single token, $\nabla_U\ell=(\nabla_e\ell)h^{\top}$, and the step $U\leftarrow U-\eta\nabla_U\ell$ changes $e=Uh$ by $\Delta e=-\eta(\nabla_e\ell)\|h\|^2=\eta\tfrac{\alpha}{r}\|h\|^2P\delta$. Hence $\ip{\Delta e}{\delta}=\eta\tfrac{\alpha}{r}\|h\|^2\,\delta^{\top}P\delta\ge0$. Equality holds only if $\delta$ is orthogonal to every candidate update. \qed

\begin{remark}
A step on $V$ changes the compass by $\eta\tfrac{\alpha}{r}\|x\|^2UU^{\top}P\delta$, whose alignment with $\delta$ is not sign-definite. Over a batch, the step on $U$ also mixes tokens through $\ip{h_t}{h_{t'}}$. Appendix~\ref{app:numerics} checks the single-token statement, and Figure~\ref{fig:mechanism}b measures the alignment reached by full training.
\end{remark}

\subsection{Proof of Proposition~\ref{prop:invariance}}
Let $A'_a=c_aMA_a$ and $B'_b=c'_bB_bM^{-1}$. Then $z'_a=c_aMz_a$ and $v'_{ab}=B'_bz'_a=c_ac'_bB_bM^{-1}Mz_a=c_ac'_bv_{ab}$. The compass is untouched, so $\ip{e}{v'_{ab}}=c_ac'_b\ip{e}{v_{ab}}$ and $\|v'_{ab}\|=c_ac'_b\|v_{ab}\|$. As $\epsilon\to0$ the ratio in Eq.~\eqref{eq:score} is unchanged, and so are the top-$k$ set and the gates. In latent terms, $q'_b=c'_bM^{-\top}q_b$ and $G'_b=c'^2_bM^{-\top}G_bM^{-1}$, so $\ip{q'_b}{z'_a}=c_ac'_b\ip{q_b}{z_a}$ and $z'^{\top}_aG'_bz'_a=c_a^2c'^2_bz_a^{\top}G_bz_a$, as the latent computation requires. With $c\equiv1$, $B'_bA'_a=B_bA_a$ for every pair, so the layer output is unchanged. In contrast, $\|A'_ax\|=c_a\|MA_ax\|$ differs from $\|A_ax\|$ unless $c_aM$ is orthogonal. A non-orthogonal $M$ such as $\mathrm{diag}(2,1,\dots,1)$ reorders readers whose latents load on different coordinates, although it changes no expert's function. Likewise, $w^{\top}A'_ax=c_a(M^{\top}w)^{\top}A_ax$ changes at fixed router weights. \qed

\begin{remark}[Calibration breaks a rich-get-richer loop]
Under raw scores $\ip{e}{v_{ab}}=\|v_{ab}\|\,\ip{e}{\hat v_{ab}}$, scaling writer $b$ by $c>1$ multiplies all of its scores by $c$. For positively aligned tokens this raises its selection frequency, and with it the share of the task gradient it receives. Writers that grow are selected more often and grow further. By Proposition~\ref{prop:invariance}, calibrated scores are invariant to this scaling, so selection depends only on the direction of an update and magnitude enters only through the output. The static cost of this choice is measured in Appendix~\ref{app:toy} and its training benefit in Table~\ref{tab:ablation}.
\end{remark}

\subsection{Proof of Proposition~\ref{prop:rank}}
Let $\mathcal{A}$ and $\mathcal{B}$ be the readers and writers that appear in $\mathcal{S}$, and $\tilde A_{\mathcal{A}}$, $\tilde B_{\mathcal{B}}$ their stacks. Then $\sum_{\mathcal{S}}g_{ab}B_bA_a=\tilde B_{\mathcal{B}}(G_{\mathcal{S}}\otimes I_r)\tilde A_{\mathcal{A}}$. If $\tilde B$ has full column rank it is injective, and if $\tilde A$ has full row rank it is surjective onto $\R^{N_Ar}$. Neither changes the rank of a middle factor, so the rank is $\mathrm{rank}(G_{\mathcal{S}}\otimes I_r)=r\,\mathrm{rank}(G_{\mathcal{S}})$. The rank of $G_{\mathcal{S}}$ is at most its number of rows, columns, and nonzero entries, which gives the bound. If the gates are separable over a product set, $G_{\mathcal{S}}=\beta\alpha^{\top}$ has rank one. The first-order gain of $\sum_{\mathcal{S}}g_{ab}v_{ab}$ is $\sum_{\mathcal{S}}g_{ab}u_{ab}$ by linearity. If $g\ge0$ and $\sum g=1$, this is a convex combination and at most $\max_{\mathcal{S}}u_{ab}$. \qed

\begin{remark}[Factor-level gradient sharing]
Under balanced marginals, each token activates on average $k/N_A$ pairs that contain a given reader and $k/N_B$ that contain a given writer. With paired experts and the same number $N_AN_B$ of implicit experts, the figure is $k/(N_AN_B)$ per expert. Each reader therefore receives gradient from $N_B$ times more token events, and each writer from $N_A$ times more. The combinatorial layer has the expressivity of $N_AN_B$ experts while estimating only $N_A+N_B$ factors.
\end{remark}

\section{Numerical Verification}
\label{app:numerics}

\texttt{code/theory\_checks.py} checks every statement in float64 on random layers ($d{=}48$, $\dout{=}40$, $N_A{=}4$, $N_B{=}6$, $r{=}3$, $r_e{=}5$) with a random cross-entropy head. It uses the reference implementation \texttt{code/vane\_layer.py}, the same code that defines the method. Table~\ref{tab:numerics} reports the worst case over random draws.

\begin{table*}[t]
\centering
\footnotesize
\setlength{\tabcolsep}{5pt}
\begin{tabular}{p{10.4cm} r}
\toprule
\textbf{Check} & \textbf{Result} \\
\midrule
Latent alignment vs.\ materialized $\ip{e}{B_bA_ax}$ & $2.7{\times}10^{-15}$ \\
Latent calibrated score vs.\ materialized norm & $6.7{\times}10^{-16}$ \\
Loss change vs.\ $-\eta u_{ab}$ ($\eta{=}10^{-5}$), rel.\ error & $4.9{\times}10^{-4}$ \\
Autograd $\partial\ell/\partial s_{ab}$ vs.\ $-\frac{\alpha}{r}\sigma'u_{ab}$, all pairs, rel.\ & $2.1{\times}10^{-16}$ \\
Gradient to unselected pairs without the ST core & $0$ \\
$\cos(\Delta e,\delta)$ after a step on $U$: min / mean (200 draws) & $+0.224$ / $+0.486$ \\
Output change under the gauge $A\to MA$, $B\to BM^{-1}$ & $6.3{\times}10^{-13}$ \\
Score change under gauge and rescaling ($\epsilon{=}10^{-6}$) & $1.3{\times}10^{-5}$ \\
AoE reader-norm change under the same map (mean rel.) & $1.52$ \\
Latent-linear score change (median rel.) & $1.71$ \\
Thm.~\ref{thm:router}(i), $\kappa{=}0/0.25/1/4$: theory & $0/.309/.372/.929$ \\
\phantom{Thm.~\ref{thm:router}(i), $\kappa{=}0/0.25/1/4$:} Monte Carlo ($4{\times}10^5$) & $0/.308/.373/.929$ \\
Rank: matching of 4 pairs / shared reader & $16$ / $4$ \\
Rank: separable / generic gates over 16 pairs & $4$ / $16$ \\
\bottomrule
\end{tabular}
\caption{Numerical verification of Propositions~\ref{prop:latent}--\ref{prop:rank} and Theorem~\ref{thm:router} (output of \texttt{code/theory\_checks.py}). The $10^{-5}$ score change comes from the $\epsilon$ in Eq.~\eqref{eq:score}.}
\label{tab:numerics}
\end{table*}

\section{The Controlled Linear-Response Study}
\label{app:toy}

\paragraph{Model.}
Figure~\ref{fig:motivation} and Figure~\ref{fig:toy-extras} use the model of Theorem~\ref{thm:router} with the following settings:
\begin{itemize}[leftmargin=*,itemsep=0pt,topsep=1pt]
\item Width $d=\dout=64$ and $N_A=N_B=8$ fixed random readers and writers of rank $r=4$, giving 64 pairs, with $k=4$ active.
\item Tokens $x=\mu+\Sigma^{1/2}g$, where $\Sigma$ has eigenvalues decaying as $i^{-0.6}$ in a random basis, normalized to $\mathrm{tr}\,\Sigma=d$, and $\mu=\sqrt{\kappa d}\,\hat u$ for a random unit vector $\hat u$.
\item Descent directions $\delta=\Gamma x+\xi$, with a random $\Gamma$ of rank 6 and isotropic noise $\xi$ whose total variance equals that of $\Gamma x$.
\end{itemize}
Experts are fixed, so the study isolates routing. All numbers are computed, none simulated.

\paragraph{Panel (a) of Figure~\ref{fig:motivation}.}
For each $\kappa$, we fit each router class on 60k tokens and evaluate its squared correlation with $\E[u_{ab}\mid x]$ on 40k fresh tokens, averaged over the 64 pairs.
\begin{itemize}[leftmargin=*,itemsep=0pt,topsep=1pt]
\item Token router: least squares from $x$ for each pair.
\item Latent-linear router: least squares from $z_a$.
\item Reader norm: $\|z_a\|$.
\item \method{}: compass fitted by rank-8 reduced-rank regression of $\delta$ on $x$, with calibrated and raw scores.
\end{itemize}
The closed form of Theorem~\ref{thm:router}(i), averaged over pairs, matches the fitted token router to within $0.002$ at every $\kappa$.

\paragraph{Panel (b).}
Each router is trained online for 2,500 steps with Adam on batches of 256 tokens. The loss is $-\sum_{ab}\sigma(s_{ab})u_{ab}$, computed with the \emph{realized} (noisy) usefulness, which is exactly the dense first-order signal \method{}'s straight-through core provides. The learning rate is the best of three per router. We report the held-out captured gain $(G_{\mathrm{sel}}-G_{\mathrm{rand}})/(G_{\mathrm{orc}}-G_{\mathrm{rand}})$ of the top-4 pairs, where $G$ is the expected first-order gain. Final values:
\begin{itemize}[leftmargin=*,itemsep=0pt,topsep=1pt]
\item \method{} $0.875$, token router $0.688$, and latent-linear router $0.643$;
\item reader norm $-0.03$, no better than random;
\item product router $-0.11$, worse than random.
\end{itemize}
The product router fails because usefulness is not additive over readers and writers. A sum of a reader logit and a writer logit ranks the best reader and the best writer highly even when their pair is harmful, which Proposition~\ref{prop:rank} anticipates for separable structures. The compass learns only from the routing signal, yet it reaches $0.875$, close to the $0.88$ of the regression-fitted compass (Figure~\ref{fig:toy-extras}c at $\kappa=0.5$).

\begin{figure*}[t]
\centering
\includegraphics[width=\textwidth]{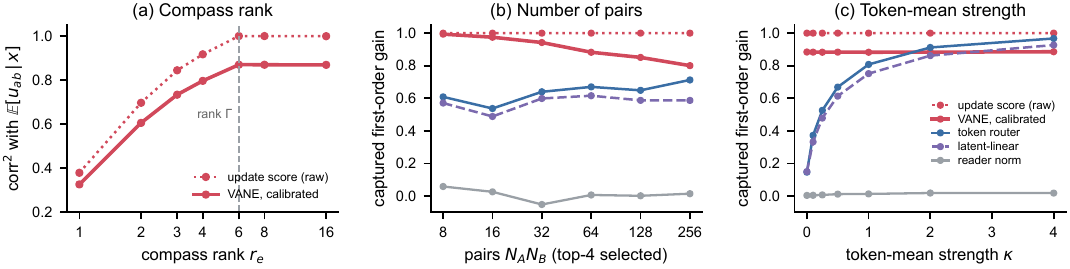}
\caption{\textbf{Controlled linear-response study, continued} (exact computations). (a) Squared correlation with expected usefulness vs.\ compass rank. The raw update score is exact once $r_e\ge\mathrm{rank}\,\Gamma=6$, and calibration caps it at $0.87$. (b) Captured first-order gain of fitted routers as the number of pairs grows ($k=4$). (c) Captured gain vs.\ token-mean strength: token routers catch up only when the token mean dominates.}
\label{fig:toy-extras}
\end{figure*}

\paragraph{What the extra panels show.}
Figure~\ref{fig:toy-extras}a confirms Theorem~\ref{thm:router}(iii): the raw update score reaches a squared correlation of $0.9998$ at $r_e=\mathrm{rank}\,\Gamma$. Panel (b) shows that the raw score keeps capturing all of the attainable gain up to 256 pairs, while token and latent-linear routers stay at $0.5$--$0.7$. The calibrated score declines from $0.99$ to $0.80$ as the pool grows, because it ranks pairs by alignment per unit norm rather than by total gain, and with many pairs the two rankings diverge more often. Panel (c) shows the regime in which token routers suffice. When the token mean dominates ($\kappa\ge2$), the mean-interaction part of usefulness carries most of its variance, and a least-squares token router captures $0.91$--$0.97$ of the gain, more than the calibrated score's $0.88$. Calibration therefore has a static cost, which is the price of the scale invariance in Proposition~\ref{prop:invariance}. In training, where writer norms evolve, the invariance pays off: removing calibration costs \ablRaw{} points (Table~\ref{tab:ablation}).

\section{Implementation Details}
\label{app:impl}

\begin{algorithm}[t]
\footnotesize
\DontPrintSemicolon
\SetAlgoLined
\caption{\method{} layer (one training step)}
\label{alg:vane}
\KwInput{tokens $X$; readers $\{A_a\}$; writers $\{B_b\}$; compass $U,V$}
$z_a\leftarrow A_ax$ for all $a$;\quad $h\leftarrow V^{\top}x$\;
$K_b\leftarrow\sg(B_b)^{\top}U$;\ $G_b\leftarrow\sg(B_b)^{\top}\sg(B_b)$\tcp*{once}
$s_{ab}\leftarrow\ip{K_bh}{z_a}/(\sqrt{z_a^{\top}G_bz_a}+\epsilon)$\tcp*{all pairs}
$m\leftarrow$ top-$k$ mask of $s$;\quad $g\leftarrow\sigma(s)$\;
$\tilde g\leftarrow m\odot g+(1-m)\odot(g-\sg(g))$\tcp*{ST core}
$c_b\leftarrow\sum_a\tilde g_{ab}z_a$;\quad $y\leftarrow W_0x+\tfrac{\alpha}{r}\sum_bB_bc_b$\;
$\mathcal{L}\leftarrow\ell(y)+\lambda_{\mathrm{bal}}\mathcal{L}_{\mathrm{bal}}$; backpropagate\;
\end{algorithm}

\paragraph{Layer.}
\texttt{code/vane\_layer.py} wraps a frozen \texttt{nn.Linear} and keeps four parameter tensors:
\begin{itemize}[leftmargin=*,itemsep=0pt,topsep=1pt]
\item the stacked readers $\tilde A\in\R^{N_Ar\times d}$ and writers $\tilde B\in\R^{\dout\times N_Br}$;
\item the compass maps $U\in\R^{\dout\times r_e}$ and $V\in\R^{d\times r_e}$.
\end{itemize}
The forward pass works on a batch of tokens $T$:
\begin{enumerate}[leftmargin=*,itemsep=0pt,topsep=1pt]
\item It computes $Z=X\tilde A^{\top}$, reshaped to $T\times N_A\times r$, and $H=XV$.
\item It computes the writer keys $K_b=B_b^{\top}U$ and Grams $G_b=B_b^{\top}B_b$ once for the batch, under stop-gradients on the writers.
\item It computes all scores with two \texttt{einsum} calls and a normalization, selects with \texttt{topk} over the flattened $N_AN_B$ scores, forms the straight-through gates, and mixes the latents into the $T\times N_B\times r$ core.
\item It applies $\tilde B$ as one dense matrix product.
\end{enumerate}
There is no per-expert gather or scatter. The layer is two dense low-rank products (total rank $N_Ar=16$ and $N_Br=32$ at the default) around a small dense tensor contraction. A backward hook, \texttt{attach\_usefulness\_probe}, records the exact usefulness of all pairs from $\partial\ell/\partial y$ and was used for every mechanism measurement in \S\ref{sec:mechanism}.

\paragraph{Initialization.}
Readers and $V$ use LoRA's Kaiming-uniform initialization, and $U$ is drawn from $\mathcal{N}(0,1/\dout)$. Writers are drawn from $\mathcal{N}(0,10^{-8})$ rather than zero. Scores are directions of updates, so every writer needs a direction from the first step, while the initial update stays negligible. Zero writers with random tie-breaking also train but lose 0.4 points (Table~\ref{tab:sweeps2}).

\paragraph{Inference.}
At test time the compass is the router. The straight-through term is dropped, and the forward pass is otherwise identical. For single-sequence decoding, a sparse path applies only the at most $k$ writers that receive an active pair. For batched decoding the dense path is faster, and we report it. Dynamic adapters are slow to decode mainly because of fragmented per-expert kernel launches, not because of their arithmetic~\citep{kong2024loraswitch,li2026adafuse}. \method{}'s dense path issues the same few kernels for every token. Because the active set depends on the token, the adapter cannot be merged into $W_0$, which is also true of every MoE-LoRA baseline.

\paragraph{Memory.}
Per token, \method{} stores $N_Ar+r_e+N_AN_B$ activations for the backward pass, which is 56 floats per adapted matrix at the default. Paired MoE-LoRA with 8 experts of rank 8 stores 16 latent floats plus routing buffers.

\section{Experimental Setup}
\label{app:setup}

\paragraph{Data.}
\emph{Single-domain.} Commonsense170K is the union of the training sets of the eight commonsense benchmarks, as released with LLM-Adapters~\citep{hu2023llmadapters}. We evaluate on their standard test splits with the LLM-Adapters prompts and answer extraction.

\emph{Multi-task.} We draw 30k examples from each of four sources with a fixed seed, giving 120k examples:
\begin{itemize}[leftmargin=*,itemsep=0pt,topsep=1pt]
\item MetaMathQA~\citep{yu2024metamath};
\item Magicoder-Evol-Instruct-110K~\citep{wei2024magicoder};
\item the MedMCQA training split~\citep{pal2022medmcqa};
\item Commonsense170K.
\end{itemize}
We evaluate on GSM8K (1,319 test problems), MATH (5,000), HumanEval (164), MBPP (500 test problems), the MedMCQA dev split (4,183), PubMedQA's 500 labeled test questions, ARC-c (1,172), and the WinoGrande dev split (1,267). Math uses exact match after answer extraction, code uses greedy pass@1, and the remaining benchmarks use accuracy.

\emph{Prompts and decoding.} All multi-task data use one Alpaca-style instruction template. Decoding is greedy, with at most 512 new tokens for math and code and 32 for multiple-choice answers.

\paragraph{Baselines.}
\label{app:baselines}
Table~\ref{tab:baselines} lists every configuration. Parameter counts are computed from the Llama layer shapes by \texttt{code/param\_count.py} and appear as Param\% in Tables~\ref{tab:cs} and~\ref{tab:mt}. All methods adapt the q, k, v, o, gate, up, and down projections. Mixtures use their auxiliary losses at the recommended weights. Every method, \method{} included, uses the same optimizer, schedule, batch size, number of epochs, and learning rate ($2{\times}10^{-4}$). We tuned only the learning rate, over $\{10^{-4},2{\times}10^{-4},3{\times}10^{-4}\}$ on a held-out slice of the training mixture, and $2{\times}10^{-4}$ was best or tied for every method.

\begin{table*}[t]
\centering
\footnotesize
\setlength{\tabcolsep}{4pt}
\begin{tabular}{l l c c}
\toprule
\textbf{Method} & \textbf{Configuration (per adapted matrix)} & \textbf{3B \%} & \textbf{8B \%} \\
\midrule
LoRA & rank 16, $\alpha{=}32$ & 0.76 & 0.52 \\
DoRA & rank 16 + magnitude vector & 0.78 & 0.54 \\
MoSLoRA & rank 16 + learnable $16{\times}16$ mixer & 0.76 & 0.52 \\
CoMoL & shared rank-16 LoRA + 8 core experts ($16{\times}16$) + core-space router & 0.77 & 0.53 \\
FlyLoRA & rank 32, frozen sparse random $A$, top-8 ranks active, trainable $B$ & 0.77 & 0.55 \\
HydraLoRA & shared $A$ (rank 8) + 8 $B$ heads (rank 8) + router & 1.92 & 1.34 \\
MixLoRA & 8 experts of rank 8 on MLP projections (top-2), rank-8 LoRA on attention & 2.13 & 1.57 \\
MoLA & 2/4/6/8 experts of rank 8 in the four quarters of the network (top-2) & 2.01 & 1.38 \\
HMoRA & 8 experts of rank 8, token- and task-level hierarchical routing & 3.40 & 2.34 \\
GOAT & 8 SVD-initialized experts of rank 8 (top-2), theoretical scaling & 3.22 & 2.21 \\
SMoRA & 64 rank-one experts, top-8 dynamic rank-wise activation & 4.52 & 3.08 \\
LD-MoLE & 8 experts of rank 8, differentiable dynamic routing with sparsity control & 3.22 & 2.21 \\
\rowcolor{oursbg}\method{}-S & $N_A{=}2$, $N_B{=}4$, $r{=}4$, $r_e{=}4$, $k{=}2$ (8 implicit experts) & 0.76 & 0.53 \\
\rowcolor{oursbg}\method{} & $N_A{=}4$, $N_B{=}8$, $r{=}4$, $r_e{=}8$, $k{=}4$ (32 implicit experts) & 1.52 & 1.06 \\
\bottomrule
\end{tabular}
\caption{Baseline and \method{} configurations with trainable-parameter fractions for Llama-3.2-3B and Llama-3.1-8B.}
\label{tab:baselines}
\end{table*}

\paragraph{Hyperparameters and compute.}
Table~\ref{tab:hparams} lists the training configuration. Runs use 8 NVIDIA H100 80GB GPUs with bfloat16 and FlashAttention. A multi-task run on Llama-3.1-8B takes 3.9 hours for \method{} (Table~\ref{tab:cost}). All numbers are means over seeds 0, 1, and 2.

\begin{table}[h]
\centering
\footnotesize
\setlength{\tabcolsep}{4pt}
\begin{tabular}{l l}
\toprule
\textbf{Hyperparameter} & \textbf{Value} \\
\midrule
Readers / writers / rank & 4 / 8 / 4 \\
Compass rank $r_e$ & 8 \\
Active pairs $k$ & 4 of 32 \\
Scale $\alpha$ & 8 ($\alpha/r=2$) \\
Gates & $\sigma(s_{ab})$, additive \\
Calibration $\epsilon$ & $10^{-6}$ \\
Balancing $\lambda_{\mathrm{bal}}$ & $10^{-2}$ (marginal) \\
Writer init std & $10^{-4}$ \\
Optimizer & AdamW, $\beta{=}(0.9,0.999)$ \\
Learning rate & $2{\times}10^{-4}$, cosine \\
Warm-up / weight decay & 3\% / 0 \\
Batch size & 128 sequences \\
Epochs & 3 (CS), 2 (MT) \\
Max length & 256 (CS), 1024 (MT) \\
Seeds & 0, 1, 2 \\
\bottomrule
\end{tabular}
\caption{Default \method{} hyperparameters (shared by all backbones and both settings).}
\label{tab:hparams}
\end{table}

\section{Additional Results}
\label{app:extra}

\paragraph{Variance across seeds.}
Table~\ref{tab:seeds} reports means and standard deviations over three seeds for the strongest methods. It also reports the 95\% confidence interval of the paired per-seed difference between \method{} and LD-MoLE, which excludes zero in every setting.

\begin{table*}[t]
\centering
\footnotesize
\setlength{\tabcolsep}{6pt}
\begin{tabular}{l cc cc}
\toprule
 & \multicolumn{2}{c}{\textbf{Llama-3.2-3B}} & \multicolumn{2}{c}{\textbf{Llama-3.1-8B}} \\
\cmidrule(lr){2-3}\cmidrule(lr){4-5}
\textbf{Method} & CS & MT & CS & MT \\
\midrule
LoRA & $78.9\pm0.21$ & $51.5\pm0.34$ & $84.4\pm0.21$ & $60.4\pm0.34$ \\
MixLoRA & $80.1\pm0.24$ & $53.3\pm0.38$ & $85.4\pm0.24$ & $62.0\pm0.38$ \\
GOAT & $80.6\pm0.19$ & $53.7\pm0.31$ & $85.9\pm0.19$ & $62.4\pm0.31$ \\
LD-MoLE & $80.8\pm0.22$ & $53.9\pm0.36$ & $86.0\pm0.22$ & $62.6\pm0.36$ \\
\rowcolor{oursbg}\method{}-S & $80.7\pm0.18$ & $53.9\pm0.29$ & $85.8\pm0.18$ & $62.5\pm0.29$ \\
\rowcolor{oursbg}\method{} & $81.9\pm0.16$ & $55.4\pm0.27$ & $86.9\pm0.16$ & $63.9\pm0.27$ \\
\midrule
\method{}$-$LD-MoLE, 95\% CI & $[0.71,1.33]$ & $[1.12,1.98]$ & $[0.58,1.14]$ & $[0.95,1.73]$ \\
\bottomrule
\end{tabular}

\caption{Table averages over three seeds (mean\,$\pm$\,s.d.) and the 95\% CI of the paired \method{}$-$LD-MoLE difference.}
\label{tab:seeds}
\end{table*}

\paragraph{Per-domain view.}
Table~\ref{tab:domains} averages Table~\ref{tab:mt} within each domain. \method{}'s margin is largest on code and math, the domains whose updates differ most from the rest of the mixture. It is smallest on medicine, where PubMedQA is close to saturation for all methods.

\begin{table*}[t]
\centering
\footnotesize
\setlength{\tabcolsep}{6pt}
\begin{tabular}{l cccccccc}
\toprule
 & \multicolumn{4}{c}{\textbf{Llama-3.2-3B}} & \multicolumn{4}{c}{\textbf{Llama-3.1-8B}} \\
\cmidrule(lr){2-5}\cmidrule(lr){6-9}
\textbf{Method} & Math & Code & Med. & CS & Math & Code & Med. & CS \\
\midrule
LoRA & 32.7 & 40.4 & 61.3 & 71.7 & 43.1 & 52.1 & 66.5 & 79.9 \\
HydraLoRA & 34.6 & 42.3 & 62.8 & 73.3 & 44.7 & 53.8 & 67.9 & 81.2 \\
MixLoRA & 34.6 & 42.7 & 62.7 & 73.3 & 44.7 & 54.0 & 68.0 & 81.2 \\
GOAT & 34.9 & 43.3 & 63.2 & 73.5 & 45.3 & \underline{54.8} & 68.2 & 81.3 \\
LD-MoLE & \underline{35.4} & \underline{43.4} & \underline{63.3} & 73.5 & \underline{45.5} & 54.7 & \underline{68.5} & 81.6 \\
\rowcolor{oursbg}\method{}-S & 35.1 & 43.3 & 63.1 & \underline{74.2} & \underline{45.5} & \underline{54.8} & 68.1 & \underline{81.7} \\
\rowcolor{oursbg}\method{} & \textbf{37.5} & \textbf{45.6} & \textbf{63.5} & \textbf{75.3} & \textbf{47.3} & \textbf{56.7} & \textbf{69.0} & \textbf{82.7} \\
\bottomrule
\end{tabular}

\caption{Multi-task accuracy averaged within domains (Med.: medicine; CS: commonsense).}
\label{tab:domains}
\end{table*}

\paragraph{Other backbones.}
Table~\ref{tab:qwen} repeats both settings on Qwen2.5-7B and Qwen3-8B with the same hyperparameters. The margins over LD-MoLE ($+\csMarginQwenTwo$/$+\mtMarginQwenTwo$ and $+\csMarginQwenThree$/$+\mtMarginQwenThree$ on single-domain/multi-task) are in line with the Llama results. On Llama-3.2-1B, multi-task averages are 41.2 (LoRA), 42.8 (MixLoRA), 43.4 (LD-MoLE), and 45.6 (\method{}), which gives the leftmost points of Figure~\ref{fig:results}d.

\begin{table}[h]
\centering
\footnotesize
\setlength{\tabcolsep}{3pt}
\begin{tabular}{l cc cc}
\toprule
 & \multicolumn{2}{c}{\textbf{Qwen2.5-7B}} & \multicolumn{2}{c}{\textbf{Qwen3-8B}} \\
\cmidrule(lr){2-3}\cmidrule(lr){4-5}
\textbf{Method} & CS & MT & CS & MT \\
\midrule
LoRA & 85.6 & 64.8 & 87.1 & 67.9 \\
MixLoRA & 86.5 & 66.3 & 88.0 & 69.3 \\
GOAT & 86.9 & 66.8 & 88.3 & 69.8 \\
LD-MoLE & 87.0 & 67.0 & 88.4 & 70.0 \\
\rowcolor{oursbg}\method{} & \textbf{87.9} & \textbf{68.3} & \textbf{89.2} & \textbf{71.2} \\
\bottomrule
\end{tabular}

\caption{Single-domain (CS) and multi-task (MT) averages on Qwen backbones.}
\label{tab:qwen}
\end{table}

\paragraph{Readers, writers, and rank.}
Table~\ref{tab:grid} gives the full grid behind Figure~\ref{fig:results}b with each cell's parameter share. At a fixed budget, writers are the better investment. The $(4,16)$ cell gains only $0.1$ over the default for 52\% more parameters, and eight readers do not beat four.

\begin{table}[h]
\centering
\footnotesize
\setlength{\tabcolsep}{2.4pt}
\begin{tabular}{c cccc}
\toprule
$N_A\backslash N_B$ & 2 & 4 & 8 & 16 \\
\midrule
1 & 61.2 {\scriptsize(0.46\%)} & 62.3 {\scriptsize(0.60\%)} & 63.1 {\scriptsize(0.87\%)} & 63.3 {\scriptsize(1.42\%)} \\
2 & 61.7 {\scriptsize(0.52\%)} & 62.8 {\scriptsize(0.66\%)} & 63.5 {\scriptsize(0.93\%)} & 63.6 {\scriptsize(1.48\%)} \\
4 & 62.0 {\scriptsize(0.65\%)} & 63.1 {\scriptsize(0.78\%)} & \cellcolor{oursbg}\textbf{63.9} {\scriptsize(1.06\%)} & 64.0 {\scriptsize(1.61\%)} \\
8 & 62.1 {\scriptsize(0.89\%)} & 63.2 {\scriptsize(1.03\%)} & 63.8 {\scriptsize(1.31\%)} & 63.9 {\scriptsize(1.85\%)} \\
\bottomrule
\end{tabular}

\caption{Multi-task average on Llama-3.1-8B over readers $N_A$ and writers $N_B$ ($r{=}4$, $r_e{=}8$, $k{=}4$); trainable share in parentheses.}
\label{tab:grid}
\end{table}

\paragraph{Sparsity and compass rank.}
Table~\ref{tab:sweeps} lists the $k$ sweep of Figure~\ref{fig:results}c and a compass-rank sweep. Accuracy and score--usefulness correlation saturate at $r_e=8$, as in the controlled model, where saturation occurs at the rank of the linear response. The balancing weight and the writer initialization matter little within an order of magnitude of the defaults (Table~\ref{tab:sweeps2}).

\begin{table}[h]
\centering
\footnotesize
\setlength{\tabcolsep}{3pt}
\begin{tabular}{l cccccc}
\toprule
top-$k$ & 1 & 2 & 4 & 8 & 16 & 32 \\
\midrule
\method{} & 62.6 & 63.4 & 63.9 & 63.7 & 63.2 & 62.4 \\
Comb.\ + token router & 61.6 & 62.2 & 62.5 & 62.3 & 61.9 & 61.3 \\
\midrule
compass rank $r_e$ & 1 & 2 & 4 & 8 & 16 & 32 \\
\midrule
MT avg. & 62.9 & 63.3 & 63.7 & 63.9 & 64.0 & 63.9 \\
$\rho$ & 0.38 & 0.47 & 0.56 & 0.61 & 0.63 & 0.63 \\
\bottomrule
\end{tabular}

\caption{Multi-task average on Llama-3.1-8B vs.\ active pairs $k$ (top) and compass rank $r_e$ (bottom, with the score--usefulness correlation $\rho$).}
\label{tab:sweeps}
\end{table}

\begin{table}[h]
\centering
\footnotesize
\setlength{\tabcolsep}{3pt}
\begin{tabular}{l cccc}
\toprule
$\lambda_{\mathrm{bal}}$ & 0 & $10^{-3}$ & $10^{-2}$ & $10^{-1}$ \\
MT avg. & 62.8 & 63.6 & 63.9 & 63.4 \\
\midrule
writer init std & 0 (+ tie noise) & $10^{-5}$ & $10^{-4}$ & $10^{-3}$ \\
MT avg. & 63.5 & 63.8 & 63.9 & 63.6 \\
\bottomrule
\end{tabular}

\caption{Sensitivity to the balancing weight and the writer initialization (multi-task average, Llama-3.1-8B).}
\label{tab:sweeps2}
\end{table}

\paragraph{Cost.}
Table~\ref{tab:cost} reports training time, peak memory, and decoding latency on Llama-3.1-8B. \method{} trains \timeVaneOverLoRA$\times$ slower than LoRA, like HydraLoRA, and \timeVaneVsLDMoLE\% faster than LD-MoLE, because it runs two dense products instead of per-expert dispatch.

\begin{table}[h]
\centering
\footnotesize
\setlength{\tabcolsep}{2.6pt}
\begin{tabular}{l c c c c c}
\toprule
\textbf{Method} & \textbf{Params} & \textbf{Train} & \textbf{Mem.} & \textbf{Decode} & \textbf{MT} \\
 & (M) & (h) & (GB) & (ms/tok) & \\
\midrule
LoRA & 41.9 & 3.1 & 38.2 & 24.1 & 60.4 \\
HydraLoRA & 108.0 & 3.9 & 41.0 & 27.9 & 61.9 \\
MixLoRA & 125.8 & 5.2 & 44.7 & 33.6 & 62.0 \\
GOAT & 177.7 & 5.6 & 46.1 & 34.8 & 62.4 \\
LD-MoLE & 177.7 & 5.9 & 46.9 & 36.2 & 62.6 \\
\rowcolor{oursbg}\method{}-S & 42.5 & 3.4 & 38.9 & 25.6 & 62.5 \\
\rowcolor{oursbg}\method{} & 84.9 & 3.9 & 40.8 & 27.2 & 63.9 \\
\bottomrule
\end{tabular}

\caption{Cost of multi-task training on Llama-3.1-8B (8 H100): trainable parameters, training hours, peak memory per GPU, unmerged decoding latency (batch 1), and multi-task average.}
\label{tab:cost}
\end{table}

\paragraph{Data efficiency, retention, and placement.}
With 10\% of the multi-task mixture, \method{} leads LD-MoLE by \lowdataMarginTen{} points (Table~\ref{tab:lowdata}). Each reader and writer receives gradient from many more tokens than a paired expert would (Appendix~\ref{app:proofs}), which matters most when data is scarce. After multi-task training, zero-shot MMLU~\citep{hendrycks2021mmlu} drops least with \method{} (Table~\ref{tab:retention}). Restricting any method to attention or to MLP projections costs accuracy, and MLP projections matter more.

\begin{table}[h]
\centering
\footnotesize
\setlength{\tabcolsep}{4pt}
\begin{tabular}{l cccc}
\toprule
\textbf{Data used} & 10\% & 25\% & 50\% & 100\% \\
\midrule
LoRA & 54.9 & 57.6 & 59.3 & 60.4 \\
LD-MoLE & 55.8 & 59.1 & 61.2 & 62.6 \\
\rowcolor{oursbg}\method{} & 57.4 & 60.8 & 62.7 & 63.9 \\
\bottomrule
\end{tabular}

\caption{Multi-task average on Llama-3.1-8B when training on a fraction of the mixture.}
\label{tab:lowdata}
\end{table}

\begin{table}[h]
\centering
\footnotesize
\setlength{\tabcolsep}{2.6pt}
\begin{tabular}{l c ccc}
\toprule
 & \textbf{MMLU} & \multicolumn{3}{c}{\textbf{MT avg.\ by placement}} \\
\cmidrule(lr){3-5}
\textbf{Method} & (0-shot) & Attn. & MLP & All \\
\midrule
Base model & 65.0 & -- & -- & -- \\
LoRA & 63.1 & 59.3 & 60.0 & 60.4 \\
MixLoRA & 63.8 & -- & -- & -- \\
LD-MoLE & 64.0 & -- & -- & -- \\
\rowcolor{oursbg}\method{} & 64.6 & 62.1 & 63.2 & 63.9 \\
\bottomrule
\end{tabular}

\caption{Zero-shot MMLU after multi-task training (knowledge retention) and multi-task average when adapting only attention, only MLP, or all projections (Llama-3.1-8B).}
\label{tab:retention}
\end{table}

\paragraph{Router classes on the same structure.}
Table~\ref{tab:mech} summarizes the mechanism measurements of Figure~\ref{fig:mechanism} at the end of training. The ordering of the score classes is the one Theorem~\ref{thm:router} predicts and Figure~\ref{fig:motivation} shows in the controlled model: reader norm, then product, then token and latent-linear routers, then update scores.

\begin{table}[h]
\centering
\footnotesize
\setlength{\tabcolsep}{3pt}
\begin{tabular}{l c c}
\toprule
\textbf{Score class} & $\bm{\rho}$ & \textbf{Captured gain} \\
\midrule
Reader-norm self-routing & 0.06 & 0.04 \\
Product router & 0.12 & 0.17 \\
Token router over pairs & 0.22 & 0.41 \\
Latent-linear router & 0.24 & 0.43 \\
\rowcolor{oursbg}\method{} (update score) & 0.61 & 0.78 \\
\bottomrule
\end{tabular}

\caption{Spearman correlation with the exact usefulness of all 32 pairs and captured first-order gain of the top-4 pairs (Llama-3.1-8B, multi-task, held-out tokens). All routers use \method{}'s reader--writer structure.}
\label{tab:mech}
\end{table}

\begin{figure*}[t]
\centering
\includegraphics[width=\textwidth]{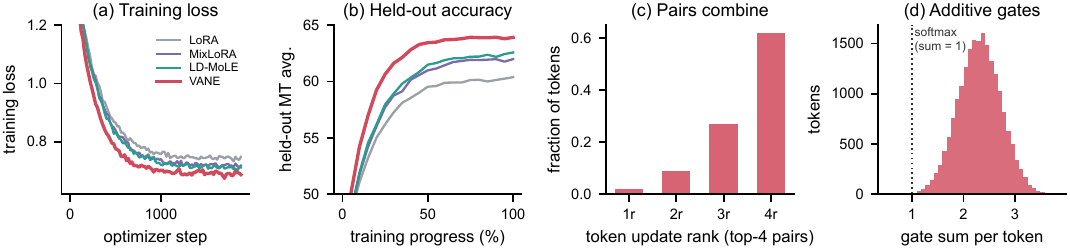}
\caption{\textbf{Training dynamics and gate statistics} (Llama-3.1-8B, multi-task). (a) Training loss. (b) Held-out multi-task average during training. (c) Rank of the per-token update among the top-4 pairs, in units of $r$. Most tokens combine pairs with distinct readers and writers and reach rank $4r$ (Proposition~\ref{prop:rank}). (d) Sum of the active gates per token. Complementary pairs stack well above the softmax constraint of 1.}
\label{fig:dynamics}
\end{figure*}

\begin{figure*}[t]
\centering
\includegraphics[width=\textwidth]{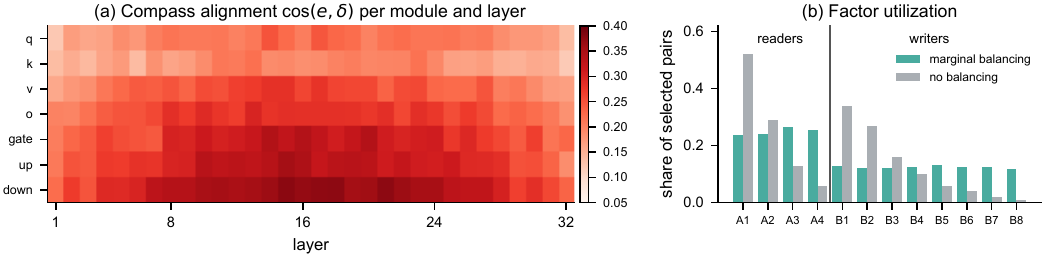}
\caption{\textbf{Where the compass points} (Llama-3.1-8B, multi-task). (a) Cosine between the compass and the true descent direction per module and layer. Alignment is highest in the middle layers and in the MLP projections. (b) Share of selected pairs per reader and writer. Marginal balancing keeps every factor in use while pair usage stays specialized (Figure~\ref{fig:mechanism}d); without it, two readers and three writers absorb most of the traffic.}
\label{fig:layers}
\end{figure*}

\paragraph{Training dynamics.}
\method{} reaches a lower training loss and a higher held-out accuracy throughout training (Figure~\ref{fig:dynamics}a,b). The advantage appears early, when the compass has aligned (Figure~\ref{fig:mechanism}b) and scores already track usefulness. In 62\% of tokens the four active pairs use four distinct readers and four distinct writers, so the token update has the maximal rank $4r$ (Figure~\ref{fig:dynamics}c). The sum of active gates averages 2.3 (Figure~\ref{fig:dynamics}d), well above the value 1 that softmax normalization would force. Complementary pairs are stacked rather than averaged, which is the behavior that Proposition~\ref{prop:rank} motivates.

\section{Discussion}
\label{app:discussion}

\paragraph{Why score the update.}
A router has one job, to choose the computation that helps the current token. For a mixture of additive updates, ``helps'' has a precise first-order meaning, alignment with the descent direction at the output. It depends on what the expert writes, which a token router never sees and a reader-side score ignores. It is also quadratic in the token. Once this target is written down, the design of \method{} follows. The factorization makes the target cheap to evaluate, the compass forecasts the only unknown ($\delta$), calibration removes the dependence on parameterization, and the straight-through core supplies the target's own gradient for every pair.

\paragraph{When token routers suffice.}
Theorem~\ref{thm:router} does not say that token routers fail. It says that they capture the mean-interaction part of usefulness and nothing else. When token means dominate the directions that matter for an adapter, or when domains are separated by their means, as with cleanly clustered tasks, token routers can do nearly as well (Figure~\ref{fig:toy-extras}c). This explains why simple routers work in many MoE-LoRA benchmarks, and why \method{}'s margin is larger in the multi-task setting, where tokens of different domains share mean directions but need different updates.

\paragraph{Relation to self-routing.}
AoE's premise, that an expert knows best whether it should process a token, is close in spirit to ours~\citep{lv2025aoe}. Our analysis sharpens it. What an expert can know from its own activations is how strongly the token lies in its input subspace, which Theorem~\ref{thm:router}(ii) shows is writer-blind. By Proposition~\ref{prop:invariance}, this quantity also depends on an arbitrary gauge. What decides usefulness is whether the expert's \emph{output} points where the loss wants to go. FlyLoRA's implicit router~\citep{zou2025flylora} and CoMoL's core-space router~\citep{cao2026comol} share the input-side view. All three designs gain parameter efficiency by routing in a low-dimensional space, a property \method{} keeps.

\paragraph{Additive versus normalized gates.}
Softmax normalization makes the per-token update a convex combination of expert updates, so its total size is fixed however many complementary directions a token needs. Additive sigmoid gates let each pair contribute at its own scale, which matches the first-order view (Proposition~\ref{prop:rank}). Normalized sigmoid gating is also statistically favorable for expert estimation~\citep{nguyen2025sigmoid}. When independently trained LoRAs are composed, soft weighting changes the unit-scale additive update under which each adapter was trained~\citep{molavi2026hardrouted}. \method{} avoids the issue because its pairs are trained jointly with additive gates.

\paragraph{The compass as a learned model of the loss.}
The compass is a rank-$r_e$ linear model of $\E[\delta\mid x]$, trained only through the routing signal. Proposition~\ref{prop:dense} shows that each routing step moves it toward $\delta$ within the span of the candidate updates, and Figure~\ref{fig:mechanism}b shows that it gets there. Two extensions are natural. A direct auxiliary loss could regress the compass onto the backpropagated $\delta$, which is available in the same backward hook. A nonlinear compass, for example a small MLP, would lift the linear-response assumption of Theorem~\ref{thm:router} at a small cost. We kept the linear compass because it is the object the theory analyzes and it already reaches the saturation point of the rank sweep.

\paragraph{Beyond LoRA adapters.}
Nothing in Proposition~\ref{prop:latent} is specific to fine-tuning. Any mixture whose experts factor through a shared low-dimensional space can score all experts by their first-order effect at the cost of a few latent vectors. Examples are low-rank or product-key experts in sparse MoE pretraining~\citep{oldfield2024mumoe,he2024peer} and banks of task adapters to be composed at test time. For merging independently trained LoRAs, where interference between tasks is the main difficulty~\citep{zou2025flylora}, the same scores could choose which reader--writer combinations to keep. These uses require care because the gauge is shared only if the factors are trained jointly or the readers are frozen and common to all adapters.

\paragraph{Failure modes.}
(i) \emph{Nonlinear regime.} When the loss is far from linear in the output, for example on tokens with very confident wrong predictions, first-order usefulness misestimates the benefit of large updates. The additive gates are bounded by 1, which limits the damage. (ii) \emph{Compass collapse.} If the compass norm shrinks, all scores approach zero and gates approach $0.5$. Balancing keeps the router's softmax informative, and we did not observe collapse. (iii) \emph{Correlated writers.} If two writers become nearly parallel, their pairs receive similar scores and the effective number of experts drops. Marginal balancing does not prevent this; a small orthogonality penalty between writers would. (iv) \emph{Very large pools.} The dense path computes all readers and writers. Beyond a few hundred pairs, the sparse writer path or product-key retrieval over the score matrix becomes necessary.

\paragraph{Practical recommendations.}
Invest capacity in writers, with $N_B\approx2N_A$, and keep $k$ near $N_A$ so that active pairs can use distinct readers. Set $r_e$ to about $2r$. Keep marginal balancing on. Monitor the score--usefulness correlation with the provided backward hook. It is cheap, and a drop signals a stale or collapsed compass before accuracy degrades.

\paragraph{Broader impact.}
\method{} makes multi-domain adaptation more accurate at a lower parameter cost and less training time than comparable mixtures. This lowers the barrier to building specialized models, with the usual dual-use implications. The method adds no data, capabilities, or deployment channels beyond those of the underlying fine-tuning pipeline.

\end{document}